\definecolor{airforceblue}{rgb}{0.36, 0.54, 0.66}
\definecolor{amber}{rgb}{1.0, 0.75, 0.0}
\definecolor{burntorange}{rgb}{0.8, 0.33, 0.0}
\definecolor{dimgray}{rgb}{0.41, 0.41, 0.41}
\tikzset{%
	highlight/.style={rectangle,blend mode = multiply,draw=blue!90!black,thick,rounded corners = 0.3 mm,inner sep=0.5pt}
}
\declaretheoremstyle[qed=$\square$,parent=section]{definitionwithend}
\declaretheorem[style=definitionwithend]{theorem}
\declaretheorem[style=definitionwithend]{proposition}
\declaretheorem[style=definitionwithend]{definition}
\declaretheorem[style=definitionwithend]{remark}
\numberwithin{equation}{section}
\newcommand{\PP}{\mathbb P}
\newcommand{\EE}{\mathbb E}
\newcommand{\R}{\mathbb R}
\newcommand{\mc}{\mathcal}
\newcommand{\diff}{\mathrm{d}}
\DeclareMathOperator{\st}{s.t.}
\def\O#1{\text{\ding{\the\numexpr#1+171}}}
\def\keywords{\vspace{1.5em}
{\textbf{Keywords}:\,\relax%
}}
\newtcbox{\alertinline}[1][red]
  {on line, arc = 0pt, outer arc = 0pt,
    colback = #1!20!white, colframe = #1!50!black,
    boxsep = 0pt, left = 1pt, right = 1pt, top = 2pt, bottom = 2pt,
    boxrule = 0pt, bottomrule = 1pt, toprule = 1pt}
\newtcolorbox{textbox}[1]{
    sharp corners,
    boxsep=0mm,
    toptitle=2mm,
    lefttitle=0mm,
    colframe=black!3,
    colback=black!3,
    title={\rule[-2pt]{4.5pt}{10pt}\hspace*{1.5mm}#1},
    fonttitle=\bfseries\itshape\sffamily,
    coltitle=black,
    halign=flush left,
}
\definecolor{airforceblue}{rgb}{0.36, 0.54, 0.66}
\definecolor{amber}{rgb}{1.0, 0.75, 0.0}
\definecolor{burntorange}{rgb}{0.8, 0.33, 0.0}
\definecolor{dimgray}{rgb}{0.41, 0.41, 0.41}
\begin{document}
\title{{
\bf Stability Evaluation via Distributional Perturbation Analysis}}

\author[1]{Jose Blanchet\thanks{\href{mailto:jose.blanchet@stanford.edu}{jose.blanchet@stanford.edu}}}
\author[2,3]{Peng Cui\thanks{\href{cuip@tsinghua.edu.cn}{cuip@tsinghua.edu.cn}}}
\author[1]{Jiajin Li\thanks{\href{mailto:jiajinli@stanford.edu}{jiajinli@stanford.edu}}}
\author[1,2]{Jiashuo Liu\thanks{\href{mailto:liujiashuo77@gmail.com}{liujiashuo77@gmail.com, jiashuo@stanford.edu}\\\hspace*{1.2em} Authors ordered alphabetically.}}
\affil[1]{Department of Management Science and Engineering, Stanford University}
\affil[2]{Department of Computer Science and Technology, Tsinghua University}
\affil[3]{Zhongguancun Lab}

\date{\today}
   
\date{\today}

\maketitle

\vspace{-5mm}

\begin{abstract}
The performance of learning models often deteriorates when deployed in out-of-sample environments. To ensure reliable deployment, we propose a stability evaluation criterion based on distributional perturbations. Conceptually, our stability evaluation criterion is defined as the minimal perturbation required on our observed dataset to induce a prescribed deterioration in risk evaluation. In this paper, we utilize the optimal transport (OT) discrepancy with moment constraints on the \textit{(sample, density)} space to quantify this perturbation. Therefore, our stability evaluation criterion can address both \emph{data corruptions} and \emph{sub-population shifts} --- the two most common types of distribution shifts in real-world scenarios. To further realize practical benefits, we present a series of tractable convex formulations and computational methods tailored to different classes of loss functions. The key technical tool to achieve this is the strong duality theorem provided in this paper. Empirically, we validate the practical utility of our stability evaluation criterion across a host of real-world applications. These empirical studies showcase the criterion's ability not only to compare the stability of different learning models and features but also to provide valuable guidelines and strategies to further improve models.

\keywords{Model Evaluation, Distributional Perturbation, Optimal Transport}
\end{abstract}


\section{Introduction}
The issue of poor out-of-sample performance frequently arises, particularly in high-stakes applications such as healthcare~\citep{bandi2018detection, wynants2020prediction,roberts2021common}, economics~\citep{hand2006classifier, ding2022retiring}, self-driving~\citep{malinin2021shifts,hell2021monitoring}.
This phenomenon can be attributed to discrepancies between the training and test datasets, influenced by various factors. Some of these factors include measurement errors during data collection~\citep{jacobucci2020machine, elmes2020accounting}, deployment in dynamic, non-stationary environments~\citep{camacho2011manipulation, conger2023strategic}, and the under-representativeness of marginalized groups in the training data~\citep{corbett2023measure}, among others. The divergence between training and test data presents substantial challenges to the reliability, robustness, and fairness of machine learning models in practical settings.
Recent empirical studies have shown that  algorithms intentionally developed for addressing distribution shifts—such as distributionally robust optimization~\citep{blanchet2019robust,sagawa2019distributionally,kuhn2019wasserstein, duchi2021learning, rahimian2019distributionally, blanchet2024distributionally}, domain generalization~\citep{zhou2022domain}, and causally invariant learning~\citep{arjovsky2019invariant, krueger2021out} --- experience a notable performance degradation  when faced with real-world scenarios~\citep{gulrajani2020search, frogner2021incorporating, yang2023change, liu2023need}. 

Instead of providing a robust training algorithm, we shift focus towards a more fundamental (in some sense even simpler) question:

\fbox{\begin{minipage}[t]{0.95\textwidth}
\begin{center}
\vspace{1mm}
\bf Q: How do we evaluate the stability of a learning model when subjected to data perturbations?
\vspace{1mm}
\end{center}
\end{minipage}}
\vspace{0.5mm}

To answer this question, our initial step is to gain a comprehensive understanding of  various types of data perturbations. In this paper, we categorize data perturbations into two classes: (\rm{i}) \emph{Data corruptions}, which encompass changes in the distribution support (i.e., observed data samples). These changes can be attributed to measurement errors in data collection or malicious adversaries. Typical examples include factors like street noises in speech recognition~\citep{kinoshita2020improving}, rounding errors in finance~\citep{li2015rounding}, adversarial examples in vision~\citep{goodfellow2014explaining} and, the Goodhart’s law empirically observed in government assistance allocation~\citep{camacho2011manipulation}.
(\rm{ii}) \emph{{{Sub-population shifts}}}, refer to perturbation on the probability density or mass function while keeping the same support. For example, model performances substantially degrade under demographic shifts in recommender systems~\citep{blodgett2016demographic, sapiezynski2017academic}; under temporal shifts in medical diagnosis~\citep{pasterkamp2017temporal}; and under spatial shifts in wildlife conservation~\citep{beery2021iwildcam}.


Recent investigation on the question \textbf{Q} predominantly centers around sub-population shifts, see \citep{li2021evaluating, namkoong2022minimax, gupta2023the}. 
However, in practical scenarios, it is common to encounter both types of data perturbation. Studies such as \citet{gokhale2022generalized} and \citet{zou2023on} have documented that models demonstrating robustness against sub-population shifts can still be vulnerable to data corruptions. This underscores the importance of adopting a more holistic approach when evaluating model stability, one that addresses both sub-population shifts and data corruptions.



To fully answer the question \textbf{Q}, we frame the model stability as a projection problem over probability space under the OT discrepancy with moment constraints. Specifically, we seek the minimum perturbation necessary on our reference measure (i.e., observed data) to guarantee that the model's risk remains below a specified threshold. The crux of our approach is to conduct this projection within the joint \emph{(sample, density)} space. Consequently, our stability metric is capable of addressing both data corruptions on the sample space  and sub-population shifts on the density or probability mass space. To enhance the practical utility of our approach,  we present a host of tractable convex formulations and computational methods tailored to different learning models. The key technical tool for this is the strong duality theorem provided in this paper.

To offer clearer insights, we visualize the most sensitive distribution in stylized examples. Our approach achieves a balanced and reasoned stance by avoiding overemphasis on specific samples or employing overly aggressive data corruptions.
Moreover, we demonstrate the practical effectiveness of our proposed stability evaluation criterion by applying it to tasks related to income prediction, health insurance prediction, and COVID-19 mortality prediction. These real-world scenarios showcase the framework's capacity to assess stability across various models and features, uncover potential biases and fairness issues, and ultimately enhance decision-making.
 

\vspace{1em}
\noindent \textbf{Notations.}\quad Throughout this paper, we let $\mathbb R$ denote the set of real numbers, $\mathbb R_+$ denote the subset of non-negative real numbers.
We use capitalized letters for random variables, e.g., $X,Y,Z$, and script letters for the sets, e.g., $\mathcal X, \mathcal Y, \mathcal Z$.
For any close set $\mathcal Z \subset \mathbb R^d$, we define $\mathcal P(\mathcal Z)$ as the family of all Borel probability measures on $\mathcal Z$.
For $\P \in \mathcal P(\mathcal Z)$, we use the notation $\mathbb E_\P[\cdot]$ to denote expectation with respect to the probability distribution $\P$.
For the prediction problem, the random variable of data points is denoted by $Z=(X,Y)\in \mathcal Z$, where $X\in \mathcal X$ denotes the input covariates, $Y\in\mathcal Y$ denotes the target. 
$f_\beta:\mathcal X\rightarrow \mathcal Y$ denotes the prediction model parameterized by $\beta$.
The loss function is denoted as $\ell:\mathcal Y \times \mathcal Y \rightarrow \mathbb R_+$, and $\ell(f_\beta(X),Y)$ is abbreviated as $\ell(\beta,Z)$. We use $(\cdot)_+=\max(\cdot,0)$. We adopt the conventions of extended arithmetic, whereby $\infty \cdot 0=0 \cdot \infty=0 / 0=0$ and $\infty-\infty=-\infty+\infty=1 / 0=\infty$.

\section{Model Evaluation Framework}
In this section, we present a stability evaluation criterion based on OT discrepancy with moment constraints, capable of considering both types of data perturbation --- data corruptions and sub-population shifts --- in a unified manner. 
The key insight lies in computing the projection distance, as shown in Figure \ref{fig:projection}, which involves minimizing the probability discrepancy between the most sensitive distribution denoted as $\Q^\star$ and the lifted training distribution $\P_{0}\otimes\delta_1$ in the joint (sample, density) space, while maintaining the constraint that the model performance falls below a specific threshold.
This threshold refer to a specific level of risk, error rate, or any other relevant performance metrics. 
The projection type methodology has indeed been employed in the literature for statistical inference, particularly in tasks like constructing confidence regions~\citep{owen2001empirical, blanchet2019robust}. However, this application is distinct from our current purpose.

\begin{figure}
    \centering
    \includegraphics[width=0.5\linewidth]{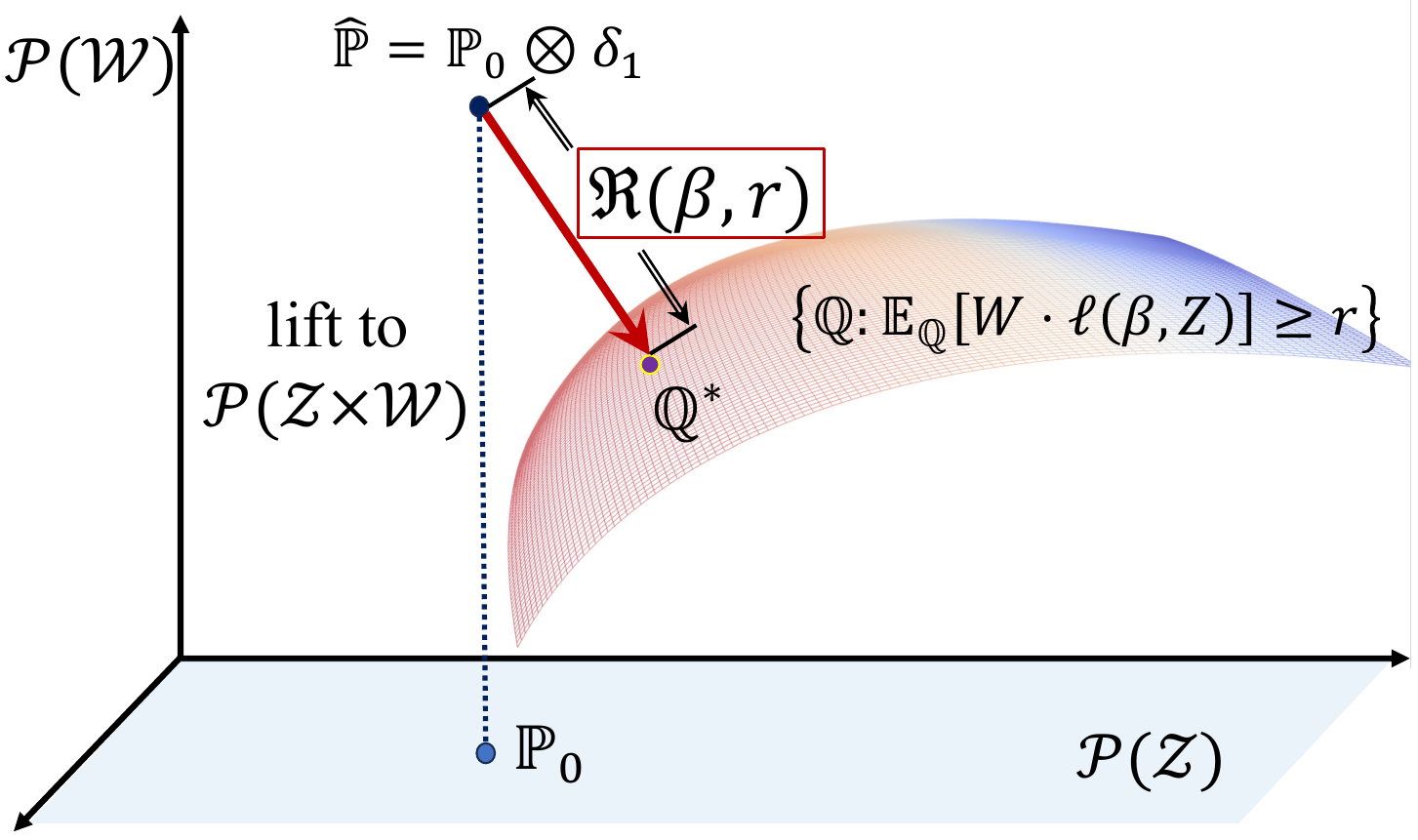}
    \caption{Data Distribution Projection}
    \label{fig:projection}
\end{figure}

\subsection{OT-based stability evaluation criterion}
\label{subsec:framework} 
 We begin by presenting the OT discrepancy with moment constraints, as proposed in ~\citet[Definition 2.1]{blanchet2023unifying}. This serves as a main technical tool for our further discussions.

\begin{definition}[OT discrepancy with moment constraints]
\label{def:OT}
If $\mc Z\subseteq \R^{d}$ and $\mc W\subseteq \R_+$ are convex and closed sets,  $c: (\mc Z \times \mc W)^2 \rightarrow  \R_+$ is a lower semicontinuous function, and $\Q,\P\in\mc P(\mc Z\times \mc W)$, then the OT discrepancy with  moment constraints induced by $c$, $\Q$ and $\P$ is the function $\mathds M_c: \mc P(\mc Z \times \mc W)^2 \to \R_+$ defined through
\begin{equation*}
\mathds M_c(\Q,\P) = \left\{
\begin{array}{clll}
\inf &\EE_{\pi}[c((Z , W), (\hat Z, \hat W))]\\
\st  &\pi \in \mc P((\mc Z \times \mc W)^2)\\[-0.7ex]
& \pi_{(Z, W)}  = \Q,\ \pi_{(\hat Z, \hat W)}  = \P \\
& \EE_\pi[W] = 1 \quad \pi \text{-a.s,}
\end{array}\right.
\label{def:mot}
\end{equation*}
where $\pi_{(Z, W)}$ and $\pi_{(\hat Z, \hat W)}$ are the marginal distributions of $(Z, W)$ and $(\hat Z, \hat W)$ under~$\pi$. 
\end{definition}

\begin{remark}
The core idea is to lift the original sample space $\mc Z$ to a higher dimensional space $\Zscr\times\Wscr$ ---- a joint (sample, density) space. Here, we treat the additional random variable $W$ as the ``density " or ``probability mass", making it also amenable to perturbations through optimal transport methods. However, these perturbations are subject to the constraint that the expectation of the density must remain equal to one. Thus, the  transportation cost function $c((z,w),(\hat z,\hat w))$ can measure the changes in both samples ($\hat z\rightarrow z$) and their probability densities ($\hat w\rightarrow w$).
\end{remark}

To evaluate the stability of a given learning model $f_\beta$ trained on the distribution $\P_{0}\in\mathcal P(\mathcal Z)$,
we formally introduce the  OT-based stability evaluation criterion as

\begin{equation}
\begin{aligned}
	\mathfrak R(\beta,r)= \left\{
        \begin{array}{cl} \inf \limits_{\Q\in \mathcal{P}(\mc Z \times \mc W)} & \,\, \mathds M_c(\Q,\hat \P)\\
        \,\text{s.t. } & \,\,  \mathbb E_{\Q}[W\cdot \ell(\beta,Z)]\geq r. \end{array}\right. 
\end{aligned}
\label{eq:primal}
\tag{P}
\end{equation}
Here, the reference measure $\hat \P$ is selected as $\P_{0}\otimes \delta_1$, with $\delta_1$ denoting the Dirac delta function,\footnote{This implies that the sample weights are almost surely equal to one with respect to the reference distribution, as we lack any prior information about them.} $\mathds M_c(\Q,\hat \P)$ represents the OT discrepancy with moment constraints between the projected distribution $\Q$ and the reference distribution $\hat \P$, $\ell(\beta,z)$ denotes the prediction risk of model $f_\beta$ on sample $z$, and $r> 0$ is the pre-defined risk threshold.

To sum up, we evaluate a model's stability under distribution shifts by quantifying the minimum level of perturbations required for the model's performance to degrade to a predetermined risk threshold. The magnitude of perturbations is determined through the use of the OT discrepancy with moment constraints and the cost function $c$, see definition \ref{def:OT}. 

Then, a natural question arises: How do we select the cost function $c$ to effectively quantify the various types of perturbations?
We aim for this cost function to be capable of quantifying changes in both the support of the distribution and the probability density or mass function. One possible candidate cost function is 
\begin{equation}
    c((z,w),(\hat z, \hat w)) = \theta_1 \cdot w\cdot d(z,\hat z) + \theta_2 \cdot (\phi(w)-\phi(\hat w) )_+. 
    \label{eq:cost_f}
\end{equation}
Here, $d(z,\hat z) = \|x-\hat x\|^2_2 + \infty \cdot |y-\hat y|$ quantifies the cost associated with the different data samples $z$ and $\hat z$ in the set $\mc Z$, with the label measurement's reliability considered infinite; $(\phi(w)-\phi(\hat w))_+$ denotes the cost related to differences in probability mass, where $\phi:\R_+\rightarrow \R_+$ is a convex function satisfying $\phi(1)=0$; $\theta_1, \theta_2 \ge 0$ serve as two hyperparameters, satisfying $1/{\theta_1}+1/{\theta_2}=C$ for some \emph{constant} $C$,  to control  the trade-off between the cost of perturbing the distribution's support and the probability density or mass on the observed data points.
This cost function was originally proposed in \citet[Section 5]{blanchet2023unifying} within the framework of distributionally robust optimization.

\begin{remark}[Effect of $\theta_1$ and $\theta_2$]
    \rm{(i)}
When $\theta_1 = +\infty$, the stability criterion $\mathfrak R(\beta,r)$ only counts the sub-population shifts, as any data sample corruptions are not allowed. In this scenario, our proposed stability criterion can be reduced to the one recently introduced in~\citet{gupta2023the} and~\citet{namkoong2022minimax}.  
\rm{(ii)} When $\theta_2= + \infty$, the stability criterion 
 $\mathfrak R(\beta,r)$ only takes the data corruptions into account instead. \rm{(iii)} 
The most intriguing scenario arises when both $\theta_1$ and $\theta_2$ have finite values. These parameters, $\theta_1$ and $\theta_2$, hold a pivotal role in adjusting the balance between data corruptions and sub-population shifts within our stability criterion, which allows us to simultaneously consider both types of distribution shifts. By manipulating the values of $\theta_1$ and $\theta_2$, we can achieve a versatile representation of a model's resilience across a wide range of distributional perturbation directions.
This adaptability carries significant implications when evaluating the robustness of models in diverse and ever-evolving real-world environments.
\end{remark}

\subsection{Dual reformulation and its interpretation}
Problem \eqref{eq:primal} constitutes an infinite-dimensional optimization problem over probability distributions and thus appears to be intractable. However, we will now demonstrate 
that by first establishing a strong duality result, problem \eqref{eq:primal} can be reformulated as a finite-dimensional optimization problems and discuss the structure of the most sensitive distribution from problem \eqref{eq:primal}.   

\begin{theorem}[Strong duality for problem \eqref{eq:primal}] 
\label{thm:duality}
Suppose that 
\textrm{(i)} The set $\mc Z\times \mc W$ is compact, 
\textrm{(ii)} $
\ell(\beta,\cdot) $ is upper semi-continuous for all $\beta$,  \textrm{(iii)} the cost function $c: (\mc Z \times \mc W)^2 \rightarrow \R_+$ is  continuous;
and \textrm{(iv)} the risk level $r$ is less than the worst case value $ \bar r:=\max_{z\in \mc Z} \ell(\beta,z)$. 
Then we have, 
\begin{equation}
	\mathfrak R(\beta,r) = \sup_{h \in \R_+, \alpha \in \R} hr +\alpha +\\ \EE_{\hat \P}\left[\tilde{\ell}_{c}^{\alpha, h}(\beta,(\hat Z,\hat W) )\right]
 \tag{D}
 \label{eq:dual}
\end{equation}
where the surrogate function $\tilde{\ell}_{c}^{\alpha,h}(\beta,(\hat z,\hat w) )$ equals to 
\[
\min_{(z,w) \in \mc Z \times \mc W}  c((z,w),(\hat z, \hat w)) +\alpha w - h\cdot  w \cdot \ell(\beta,
 z),
\]
for all $\hat z \in \mc Z$ and $\hat w\in \mc W$. 
\end{theorem}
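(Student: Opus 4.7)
My strategy is to recast \eqref{eq:primal} as a single infinite-dimensional linear program over couplings and then apply Lagrangian duality. Unfolding Definition~\ref{def:OT} and absorbing $\mathbb Q$ as the first marginal of $\pi$, the primal becomes
\[
\mathfrak R(\beta,r) = \inf_{\pi\in\mc P((\mc Z\times\mc W)^2)}\EE_\pi[c((Z,W),(\hat Z,\hat W))]\ \text{ s.t. }\ \pi_{(\hat Z,\hat W)}=\hat\P,\ \EE_\pi[W]=1,\ \EE_\pi[W\ell(\beta,Z)]\ge r.
\]
I would attach a multiplier $h\ge 0$ to the risk inequality and $\alpha\in\R$ (up to the paper's sign convention) to the moment equality, while keeping the second-marginal condition as a hard constraint on $\pi$. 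Weak duality $\inf_\pi\sup_{h,\alpha}\geq\sup_{h,\alpha}\inf_\pi$ is immediate; the substance of the theorem is the reverse inequality.

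\textbf{Interchange of inf and sup.} The Lagrangian is linear in $\pi$ and affine in $(h,\alpha)$. The set $\{\pi:\pi_{(\hat Z,\hat W)}=\hat\P\}$ is convex and, under assumption (i) together with Prokhorov's theorem, weakly compact in $\mc P((\mc Z\times\mc W)^2)$. Assumption (iii) makes the cost functional weakly continuous in $\pi$, while for $h\ge 0$ the term $-hW\ell(\beta,Z)$ is weakly lower semicontinuous in $\pi$ by assumption (ii). I would then invoke Sion's minimax theorem (or an abstract saddle-point theorem on a weakly compact convex set) to swap $\inf_\pi$ and $\sup_{h,\alpha}$. Hypothesis (iv), $r<\bar r$, supplies a strictly feasible $\pi$ by concentrating mass at a maximizer of $\ell(\beta,\cdot)$ (existing by compactness and upper semicontinuity), which rules out a duality gap arising from the inequality constraint.

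\textbf{Pointwise minimization.} Once the inf/sup are swapped, the inner subproblem has fixed second marginal $\hat\P$ and integrand $c((z,w),(\hat z,\hat w))+\alpha w - hw\ell(\beta,z)$. By the standard interchange of infimum and integral (justified through a measurable selection theorem, e.g.\ Berge's maximum theorem or Aumann's selection theorem, using compactness of $\mc Z\times\mc W$, continuity of $c$, and upper semicontinuity of $\ell$), the inner infimum collapses to
\[
\EE_{\hat\P}\!\left[\min_{(z,w)\in\mc Z\times\mc W} c((z,w),(\hat Z,\hat W))+\alpha w - hw\ell(\beta,z)\right]=\EE_{\hat\P}\!\left[\tilde\ell_c^{\alpha,h}(\beta,(\hat Z,\hat W))\right],
\]
which is precisely the right-hand side of \eqref{eq:dual}, delivering the claim after recombining with the $hr+\alpha$ terms.

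\textbf{Main obstacle.} The principal technical difficulty I anticipate is justifying the minimax swap without a gap: compactness of $\mc W$ in assumption (i) is crucial to keep the Lagrangian bounded below (otherwise $\alpha w - hw\ell$ could be driven to $-\infty$ by large $w$), and the semicontinuity of $\ell$ rather than continuity requires some care when verifying the topological hypotheses of the minimax theorem and the attainment of the inner $\min$. A cleaner alternative I would seriously pursue is to first dualize only the risk inequality via the Slater-type condition (iv), reducing the inner problem to an OT-with-moment-constraint objective for which strong duality is already established in the unifying framework of \citet{blanchet2023unifying}; one then only needs standard one-dimensional Lagrangian duality on the outside, which is straightforward.
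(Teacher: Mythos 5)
Your proposal follows essentially the same route as the paper's own proof: the same reformulation as an infinite-dimensional linear program over couplings with the second marginal fixed at $\hat\P$, the same Lagrangian with multipliers $h\ge 0$ and $\alpha$, the minimax swap via Sion's theorem on the weakly compact convex set $\Pi_{\hat\P}$ (tightness/Prokhorov from compactness of $\mc Z\times\mc W$, lower semicontinuity of the Lagrangian from (ii)--(iii)), the point mass at a maximizer of $\ell(\beta,\cdot)$ to exploit hypothesis (iv), and a measurable-selection argument to pull the pointwise minimum inside the expectation. The plan is correct and all the key ingredients match the paper's argument.
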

For a detailed proof, we direct interested readers to the Appendix \ref{appendix-sec:strong-duality}. 

\begin{remark}
When the reference measure $\P_0$ is a discrete measure, some technical conditions in Theorem \ref{thm:duality} (e.g., compactness, (semi)-continuity) can be eliminated by utilizing the abstract semi-infinite duality theory for conic linear programs. Please refer to \citet[Proposition 3.4]{shapiro2001duality} and our proof in Appendix \ref{appendix-sec:strong-duality} for more detailed information. 
\end{remark}




If we adopt the cost function in the form of~\eqref{eq:cost_f} for two commonly used $\phi$ functions, we can simplify the surrogate function further by obtaining the closed form of $w$. Here, we explore the following cases:
(i) Selecting $\phi(t) = t\log t - t + 1$, which is associated with the Kullback--Leibler (KL) divergence.
(ii) Choosing $\phi(t) = (t - 1)^2$, which is linked to the $\chi^2$-divergence.

 \begin{proposition}[Dual reformulations]
 \label{theorem:dual-reformulation}
Suppose that $\mc W =\R_+$. 
\textrm{(i)} If $\phi(t) = t \log t - t + 1$, then the dual problem \eqref{eq:dual} admits:
\begin{equation}
		\sup_{h\geq 0} \, hr- \theta_2 \log \mathbb E_{\P_0} \left[\exp\left(\frac{\ell_{h,\theta_1}(\hat Z)}{\theta_2}\right) \right];
    \label{equ:D-risk-kl}
\end{equation}
\textrm{(ii)} If $\phi(t) = (t-1)^2$, then the dual problem \eqref{eq:dual} admits:
\begin{equation}
	\label{equ:D-risk-chi}
	\begin{aligned}
	\sup\limits_{h\geq 0, \alpha\in\mathbb R}  hr+\alpha+\theta_2 -\theta_2\mathbb E_{\P_0}\left[\left(\frac{\ell_{h,\theta_1}(\hat Z)+\alpha}{2\theta_2}+1\right)_+^2 \right],
	\end{aligned}
	\end{equation}
 
where the $d$-transform of $h\cdot \ell(\beta,\cdot)$ with the step size $\theta_1$ is defined as
 \begin{equation*}
	\ell_{h, \theta_1}(\hat z)\coloneqq \max_{z\in\Zscr} h\cdot  \ell(\beta,z)-\theta_1\cdot d(z,\hat z). 
\end{equation*}
\end{proposition}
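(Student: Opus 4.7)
The plan is to specialize Theorem~\ref{thm:duality} to the transport cost \eqref{eq:cost_f} and then evaluate in closed form the inner minimization defining the surrogate $\tilde{\ell}_c^{\alpha,h}$ for each of the two choices of $\phi$. Under the reference $\hat\P=\P_0\otimes\delta_1$ the second coordinate $\hat w$ equals $1$ almost surely, and because $\phi$ is convex with $\phi(1)=0$ we have $\phi\ge 0$ on $\R_+$, so the density-perturbation penalty collapses to $\theta_2\phi(w)$. The surrogate therefore becomes
\[
\tilde{\ell}_c^{\alpha,h}(\beta,(\hat z,1))=\min_{w\ge 0,\,z\in\mathcal Z}\Bigl[\theta_1\,w\,d(z,\hat z)+\theta_2\,\phi(w)+\alpha w-h\,w\,\ell(\beta,z)\Bigr].
\]

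Next I would separate the joint minimization into nested ones. Since the $z$-dependent summands are linear in $w\ge 0$, I can factor $w$ out and recognize the inner minimization over $z$ as the $d$-transform defined in the proposition: $\min_{z\in\mathcal Z}\bigl[\theta_1 d(z,\hat z)-h\,\ell(\beta,z)\bigr]=-\ell_{h,\theta_1}(\hat z)$. This collapses the surrogate to the one-dimensional convex program
\[
\tilde{\ell}_c^{\alpha,h}(\beta,(\hat z,1))=\min_{w\ge 0}\Bigl[\theta_2\,\phi(w)+\bigl(\alpha-\ell_{h,\theta_1}(\hat z)\bigr)\,w\Bigr],
\]
which, up to a sign convention on $\alpha$, is the Legendre--Fenchel conjugate of $\theta_2\phi$ restricted to $\R_+$. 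All that remains is to evaluate this conjugate for each $\phi$ and substitute it into the outer dual \eqref{eq:dual}.

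For case (i), $\phi(w)=w\log w-w+1$, the first-order condition gives the exponential minimizer $w^\star=\exp\!\bigl((\ell_{h,\theta_1}(\hat z)-\alpha)/\theta_2\bigr)$; using $\theta_2\log w^\star=\ell_{h,\theta_1}(\hat z)-\alpha$, the optimal value simplifies algebraically to $\theta_2(1-w^\star)$. Substituting into \eqref{eq:dual} produces a dual objective of the form $hr+\alpha+\theta_2-\theta_2\,e^{-\alpha/\theta_2}\,M(h)$ with $M(h)=\mathbb E_{\P_0}[\exp(\ell_{h,\theta_1}(\hat Z)/\theta_2)]$, which is strictly concave in $\alpha$; maximizing over $\alpha$ in closed form (the optimizer is explicit in $\log M(h)$) and back-substituting yields the single-variable formula \eqref{equ:D-risk-kl}. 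For case (ii), $\phi(w)=(w-1)^2$, the unconstrained minimizer is affine in $\ell_{h,\theta_1}(\hat z)-\alpha$, and projecting onto $\{w\ge 0\}$ gives $w^\star=\bigl(1+(\ell_{h,\theta_1}(\hat z)-\alpha)/(2\theta_2)\bigr)_+$; a short computation shows the minimum value equals $\theta_2-\theta_2(w^\star)^2$, and substitution into \eqref{eq:dual} produces \eqref{equ:D-risk-chi}.

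The main obstacle sits in case (ii): the positive-part non-linearity destroys smoothness and strict concavity in $\alpha$, so---in contrast to the KL case---the outer $\alpha$-maximization cannot be resolved in closed form, and $\alpha$ must be retained as a free variable in the final dual. A secondary, purely routine, point is verifying that the first-order condition really does locate the global minimum of the one-dimensional convex program over $\R_+$, which reduces to the standard dichotomy of whether the non-negativity constraint is active. Finally, the interchange of the minimum and the expectation needed to obtain the pointwise surrogate's $\hat\P$-integral is already justified by the continuity and compactness hypotheses of Theorem~\ref{thm:duality}.
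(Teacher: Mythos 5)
Your overall route is the same as the paper's: specialize the surrogate of Theorem~\ref{thm:duality} to the cost \eqref{eq:cost_f}, use $\hat w=1$ a.s.\ and $\phi(1)=0$ to drop the positive part, perform the $z$-minimization via the $d$-transform (you do it first by exploiting linearity in $w\ge 0$; the paper does the $w$-minimization first and then invokes monotonicity of $(\phi+\mathbb{I}_{\R_+})^*$ --- an immaterial difference for a joint minimum), recognize the remaining one-dimensional problem as a Fenchel conjugate of $\theta_2\phi$ on $\R_+$, eliminate $\alpha$ in closed form for KL, and keep it free for $\chi^2$. Your conjugate evaluations ($w^\star=\exp((\ell_{h,\theta_1}-\alpha)/\theta_2)$ with value $\theta_2(1-w^\star)$, and $\theta_2-\theta_2(w^\star)^2$ in the quadratic case) are exactly the paper's.

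The one step that fails as literally written is the $\alpha$-elimination in case (i). You carry $+\alpha w$ inside the surrogate (as in the displayed statement of \eqref{eq:dual}) together with $+\alpha$ outside, arriving at the objective $hr+\alpha+\theta_2-\theta_2 e^{-\alpha/\theta_2}M(h)$. Its $\alpha$-derivative is $1+e^{-\alpha/\theta_2}M(h)>0$, so this function is strictly increasing in $\alpha$ and its supremum is $+\infty$: there is no finite maximizer, and the claimed closed-form back-substitution does not go through. The resolution is not merely a ``sign convention'': the multiplier for the constraint $\EE_\pi[W]=1$ enters the Lagrangian as $\alpha(1-\EE_\pi[W])$, i.e.\ the surrogate must carry $-\alpha w$ when the dual objective carries $+\alpha$ (the paper's appendix uses exactly this pairing; the $+\alpha w$ in the displayed theorem is inconsistent with its own proof). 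With $-\alpha w$ inside, the first-order condition becomes $1-e^{\alpha/\theta_2}M(h)=0$, giving $\alpha^\star=-\theta_2\log M(h)$ and then \eqref{equ:D-risk-kl}; the same correction is needed in case (ii) to land on $(\ell_{h,\theta_1}+\alpha)/(2\theta_2)$ rather than $(\ell_{h,\theta_1}-\alpha)/(2\theta_2)$ alongside the standalone $+\alpha$. Once you fix this pairing explicitly, your argument coincides with the paper's proof.
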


When the reference measure $\P_0$ is represented as the empirical measure $\P_0 = \frac{1}{n}\sum_{i=1}^n\delta_{\hat z_i}$, the characterization of the most sensitive distribution $\Q^\star$, can be elucidated through the dual formulation provided in \eqref{equ:D-risk-kl} and \eqref{equ:D-risk-chi}.
\begin{remark}[Structure of the most sensitive distribution]
\label{remark:q-star}
 We express $\Q^\star$ as follows: 
 $\Q^\star = \tfrac{1}{n}\sum_{i=1}^n \delta_{(z_i^\star, w_i^\star)}$, where each $(z_i^\star, w_i^\star) \in \mathcal{Z} \times \mathbb{R}_+$ satisfies the conditions:  
\begin{align*}
		& z_i^\star = \mathop{\arg\max}_{z\in\Zscr} h^\star\ell(\beta;z)-\theta_1 \cdot d(z,\hat{z}_i), \quad \forall i \in [n]. 
\end{align*}

Using various $\phi$ functions requires adjusting the weight in a distinct manner:\\ \textrm{(i)} If $\phi(t) = \log t - t + 1$, then we have:
 \begin{align*}
  & w_i^\star \propto \exp\left(\frac{\ell_{h^\star,\theta_1}(\hat{z}_i)}{\theta_2}\right),\quad \forall i \in [n];  
  \end{align*}
   \textrm{(ii)} If $\phi(t) = (t-1)^2$, then we have:
   \[
w_i^\star \propto \left(\frac{\ell_{h^\star,\theta_1}(\hat{z}_i)-\alpha^*}{2\theta_2}+1  \right)_+,\quad \forall i \in [n], 
\]
 where $h^\star$ and $\alpha^\star$
are the optimal solution of problem \eqref{eq:dual}.
Therefore, it becomes evident that the most sensitive distribution encompasses both aspects of shifts: the transformation from $\hat{z}_i$ to $z_i^\star$ and the reweighting from $\frac{1}{n}$ to $w_i^\star$. Our cost function enables a versatile evaluation of model stability across a range of distributional perturbation directions. This approach yields valuable insights into the behavior of a model in different real-world scenarios and underscores the importance of incorporating both types of distributional perturbation in stability evaluation.
\end{remark}

\subsection{Computation}
In this subsection, our emphasis lies in addressing problems \eqref{equ:D-risk-kl} and \eqref{equ:D-risk-chi} with varying types of loss functions, specifically when the reference measure $\P_0$ takes the form of the empirical distribution.\\  

\noindent\textbf{Convex piecewise linear loss functions.}\quad If the loss function $\ell(\beta,\cdot)$ is piecewise linear (e.g., linear SVM), we can show that \eqref{equ:D-risk-kl} admits a tractable finite convex program. 

\begin{theorem}[KL divergence]
\label{thm:kl_linear}
    Suppose that $\mc Z =\mc \R^d\times\{+1,-1
    \}$ and  $\ell(\{(a_k,b_k)\}_{k\in[K]},z) = \max_{k \in [K]} y\cdot a_k^\top x+b_k$. The negative optimal value of problem \eqref{equ:D-risk-kl} is equivalent to the optimal value of the finite convex program:
\begin{flalign*}
\begin{array}{lllll}
     &\min\ \   &-h r  +t & \\
     &\st \ \   & \lambda \in \R_+, t \in \R, \eta \in \R_+^n, p\in\R_n\\
      && (\eta_i,  \theta_2, p_i- t) \in \mc K_{\exp} &\forall i \in [n] \\
     && \frac{\|a_k\|_2^2}{4\theta_1} h^2  + \hat y_i \cdot a_k^T\hat x_i \cdot h +b_k\leq p_i,\,  &~\forall k\in[K], \forall i \in [n]\\
         &&\frac{1}{n}\sum_{i=1}^n \eta_i \leq  \theta_2, 
    \end{array}
\end{flalign*}

where the set $\mc K_{\exp}$ is the exponential cone defined as 
\begin{equation*}
K_{\exp} = \left\{(x_1,x_2,x_3)\in \R^3: x_1\ge x_2 \cdot \exp\left(\tfrac{x_3}{x_2}\right),x_2>0\right\} \cup \{(x_1,0,x_3)\in\R^3: x_1\ge 0, x_3\leq 0\}. 
\end{equation*}
\end{theorem}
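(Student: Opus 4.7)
The plan is to start from the dual expression \eqref{equ:D-risk-kl}, compute the $d$-transform $\ell_{h,\theta_1}$ in closed form using the specific piecewise linear/squared-$\ell_2$ structure, and then mechanically convert the remaining log-sum-exp outer problem into an exponential-cone program by standard epigraphical tricks. Taking the negative turns the sup into an inf, so the objective becomes $-hr + \theta_2 \log \tfrac{1}{n}\sum_{i=1}^n \exp(\ell_{h,\theta_1}(\hat z_i)/\theta_2)$, and the claim is that this admits the stated conic reformulation.

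First I would compute $\ell_{h,\theta_1}(\hat z_i)$ explicitly. Because $d(z,\hat z)$ contains the term $\infty\cdot |y-\hat y|$, the inner maximizer must satisfy $y=\hat y_i$, so only $x$ is free. For a fixed index $k$, the inner problem reduces to
\[
\max_{x\in\R^d}\; h\bigl(\hat y_i\, a_k^\top x + b_k\bigr) - \theta_1\|x-\hat x_i\|_2^2,
\]
which is an unconstrained concave quadratic in $x$; the first-order condition gives $x^\star = \hat x_i + \tfrac{h\hat y_i}{2\theta_1}\,a_k$, and substituting back yields the closed form
\[
\ell_{h,\theta_1}(\hat z_i)=\max_{k\in[K]}\; \frac{\|a_k\|_2^2}{4\theta_1}h^2 + \hat y_i a_k^\top \hat x_i\, h + h b_k.
\]
(The statement of the theorem drops the $h$ multiplier on $b_k$, which I read as a minor typo; the derivation and the conic reformulation proceed identically.) This step uses only the compactness/continuity hypotheses to justify the swap of max and supremum and the attainment in the strong duality theorem.

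Next I would introduce the epigraphical variable $p_i\ge\ell_{h,\theta_1}(\hat z_i)$, which is exactly encoded by the linear-in-$p_i$, convex-in-$h$ constraints
\[
\frac{\|a_k\|_2^2}{4\theta_1}h^2 + \hat y_i a_k^\top \hat x_i\, h + h b_k \le p_i,\qquad \forall k\in[K],\ \forall i\in[n].
\]
Since the outer log-sum-exp is monotone in each $p_i$, using these upper bounds is without loss of optimality. For the log-sum-exp itself, I would introduce $t\in\R$ together with $\eta_i\ge 0$ and model the constraint $\theta_2\log\tfrac{1}{n}\sum_i \exp(p_i/\theta_2)\le t$ as $\tfrac{1}{n}\sum_i \theta_2\exp((p_i-t)/\theta_2)\le \theta_2$, i.e.,
\[
\eta_i \;\ge\; \theta_2\exp\!\left(\tfrac{p_i-t}{\theta_2}\right),\qquad \tfrac{1}{n}\sum_{i=1}^n \eta_i \le \theta_2.
\]
The first inequality is exactly the exponential-cone membership $(\eta_i,\theta_2,p_i-t)\in\mc K_{\exp}$, and the second matches the aggregate constraint in the theorem. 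Combining with the objective $-hr+t$ (and $h\ge 0$) recovers the stated convex program.

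The main obstacle, and the only step that needs genuine care, is verifying that each reduction is lossless: that the $\ell_2^2$-quadratic inner max is attained (so the closed form is valid on the entire domain of $h\ge 0$); that the label-freezing from $\infty\cdot|y-\hat y|$ is handled under the extended arithmetic conventions declared in the paper; and that the exponential-cone lifting is tight at optimality (this follows because the log-sum-exp objective is strictly increasing in each $p_i$ and in $t$, so any slack in $p_i\ge\max_k(\cdots)$ or $\eta_i\ge\theta_2\exp((p_i-t)/\theta_2)$ can be removed without increasing the objective). Once these are verified, the equivalence with \eqref{equ:D-risk-kl} (after negation) is immediate.
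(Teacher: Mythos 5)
Your proposal is correct and follows essentially the same route as the paper's proof: epigraph variable $t$ for the log-sum-exp objective, exponential-cone lifting via $\eta_i$ with the aggregate constraint $\tfrac{1}{n}\sum_i\eta_i\le\theta_2$, epigraph variables $p_i$ for $\ell_{h,\theta_1}(\hat z_i)$, and the closed-form evaluation of the concave quadratic inner maximum (the paper's chain of equivalences in its Appendix does exactly this, merely in the reverse order of your presentation). Your observation that the $b_k$ term should carry an $h$ multiplier is a fair reading of the loss definition $h\cdot\ell=\max_k(h\,y\,a_k^\top x+h\,b_k)$, and, as you note, it does not affect the structure of the reformulation.
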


\begin{theorem}[$\chi^2$ Divergnce]
\label{thm:chi_linear}
    Suppose that $\mc Z =\mc \R^d\times\{+1,-1
    \}$ and  $\ell(\{(a_k,b_k)\}_{k\in[K]},z) = \max_{k \in [K]} y\cdot a_k^\top x+b_k$. The negative optimal value of problem \eqref{equ:D-risk-kl} is equivalent to the optimal value of the finite convex program
\begin{align*}
\begin{array}{cclll}
         &\min  &-h r  +t & \\
         &\st   & h \in \R_+, \alpha \in \R, t \in \R, \eta \in \R_+^n\\
         &&   \frac{\|a_k\|_2^2}{4\theta_1}\cdot h^2  + \hat y_i \cdot a_k^T\hat x_i \cdot h +b_k+{2\theta_2}\alpha +2\theta_2 \leq 2\theta_2 \eta_i\,  &~\forall k\in[K], \forall i \in [n]  \\
           && \frac{\theta_2}{n} \sum_{i=1}^n \eta_i^2  \leq t. \nonumber  
    \end{array}
\end{align*}
\end{theorem}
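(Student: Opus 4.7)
The plan is to start from the dual reformulation \eqref{equ:D-risk-chi} in Proposition~\ref{theorem:dual-reformulation}(ii) and reduce it to a finite convex program by exploiting the piecewise linear structure of $\ell$ and the empirical nature of $\P_0$. The first step is to evaluate the $d$-transform $\ell_{h,\theta_1}(\hat z_i)$ in closed form at each atom $\hat z_i=(\hat x_i,\hat y_i)$. Because the cost $d(z,\hat z)=\|x-\hat x\|_2^2+\infty\cdot|y-\hat y|$ forces $y=\hat y_i$, the inner maximization reduces to
\[
\ell_{h,\theta_1}(\hat z_i)=\max_{k\in[K]}\sup_{x\in\R^d}\bigl(h\,\hat y_i\,a_k^\top x+hb_k-\theta_1\|x-\hat x_i\|_2^2\bigr).
\]
For each fixed $k$ the bracketed expression is a concave quadratic in $x$ whose unique maximizer is $x_k^\star=\hat x_i+\tfrac{h\,\hat y_i\,a_k}{2\theta_1}$; substituting back yields
\[
\ell_{h,\theta_1}(\hat z_i)=\max_{k\in[K]}\Bigl(\tfrac{\|a_k\|_2^2}{4\theta_1}h^2+\hat y_i\,a_k^\top\hat x_i\,h+b_k\Bigr),
\]
which matches the left-hand side of the $k$-indexed constraint appearing in the stated convex program.

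Next, I would negate the dual objective to obtain a minimization problem and convexify the composition of a pointwise maximum, a positive part, and a square by introducing epigraph variables. Setting $\eta_i\geq 0$ together with
\[
\tfrac{\|a_k\|_2^2}{4\theta_1}h^2+\hat y_i\,a_k^\top\hat x_i\,h+b_k+2\theta_2\alpha+2\theta_2\leq 2\theta_2\eta_i\qquad\forall k\in[K],\,i\in[n]
\]
enforces $\eta_i\geq\bigl(\tfrac{\ell_{h,\theta_1}(\hat z_i)}{2\theta_2}+\alpha+1\bigr)_+$ after an affine rescaling of the dual variable $\alpha$ by the factor $2\theta_2$, thereby collapsing the outer $\max_k$ and the positive-part into a family of linear inequalities. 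A second epigraph variable $t$ is used to upper-bound the sample-average of squares via $\tfrac{\theta_2}{n}\sum_{i=1}^n\eta_i^2\leq t$, which is a convex quadratic (equivalently second-order cone) constraint. Every inequality is convex: the left-hand side of the $k$-indexed constraint is a convex quadratic in $h$ (since $\|a_k\|_2^2/(4\theta_1)\geq 0$) plus an affine term in $\alpha$, while the sum-of-squares bound is jointly convex in $\eta$ and $t$. Equivalence of optimal values follows because, at any optimum, both $\eta_i$ and $t$ can be tightened to equality without affecting feasibility of the remaining variables.

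The main obstacle is the first step: one must carefully handle the infinite cost on label mismatches, which collapses the transport to the covariate space, and interchange the maximum over pieces $k$ with the supremum over $x$. This interchange is justified because the objective is separately concave in $x$ for each fixed $k$, so the joint supremum decomposes into $K$ unconstrained concave quadratic programs that admit a unified closed-form upper envelope. Once the closed form of $\ell_{h,\theta_1}$ is in hand, the remaining epigraph reformulation is a mechanical exercise in conic optimization, and the advertised convex program falls out immediately.
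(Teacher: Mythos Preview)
Your proposal is correct and follows essentially the same approach as the paper: compute the $d$-transform $\ell_{h,\theta_1}(\hat z_i)$ in closed form by solving the concave quadratic in $x$ (the paper carries out this computation once in the KL proof and simply cites it here via \eqref{eq:piecelinear_sup}), then introduce epigraph variables $\eta_i\in\R_+^n$ and $t\in\R$ to linearize the composition $(\cdot)_+^2$, relying on the monotonicity of $t\mapsto (t)_+^2$ exactly as the paper does. The only difference is cosmetic ordering---you evaluate the $d$-transform first and then lift, whereas the paper lifts first and then substitutes the closed form.
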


For a detailed proof, we direct interested readers to the Appendix \ref{subsec:appendix-linear} and \ref{subsec:proof-chi} for more detailed information.
Equipped with Theorem \ref{thm:kl_linear} and  \ref{thm:chi_linear}, we can calculate our evaluation criterion by general purpose conic optimization solvers such as MOSEK and GUROBI.\\

\noindent\textbf{0/1 loss function.}\quad 
In practical applications, employing a 0/1 loss function offers users a simpler method to set up the risk level $r$, which corresponds to a pre-defined acceptable level of \emph{error rate}. That is, given a trained model $\beta$, we define the loss function on the sample $(x,y)$ as 
\[
\ell(\beta, (x,y)) = \mathbb{I}_{y \neq f_\beta(x)}, 
\]
where $\mathbb{I}$ is the indicator function defined as $\mathbb{I}_{y \neq f_\beta(x)} =0$ if $y \neq f_\beta(x)$; $=0$ otherwise. In this scenario, the $d$-transform of $h\cdot \ell_\beta(\cdot)$ can be expressed in a closed form. Conceptually, this loss function promotes long-haul transportation, as it encourages either minimal perturbation or no movement at all, i.e., 
\[
\ell_{h, \theta_1}(\hat z) = (h-\theta_1\cdot d^\star (\hat z))_+,
\]
where $d^\star (\hat z):= \min_{z\in Z}\{d(z,\hat z):\ell(\beta,z)=1\}$. 
This distance quantifies the minimal adjustment needed to fool or mislead the classifier's prediction for the sample $\hat z$. A similar formulation has been employed in \citet{si2021testing} to assess group fairness through optimal transport projections. Finally,  the dual formulation \eqref{equ:D-risk-kl} is reduced to an one-dimensional convex problem w.r.t $h$. \\

\noindent\textbf{Nonlinear loss functions.}\quad 
For general nonlinear loss functions, such as those encountered in deep neural networks, the dual formulation \eqref{equ:D-risk-kl} retains its one-dimensional convex nature with respect to $h$. However, the primary computational challenge lies in solving the inner maximization problem concerning the sample $z$. In essence, this dual maximization problem \eqref{equ:D-risk-kl} for nonlinear loss functions is closely associated with adversarial training~\citep{nouiehed2019solving, yi2021improved}. All algorithms available in the literature for this purpose can be applied to our problem as well. The key distinction lies in the outer loop. In our case, we optimize over $h\in\mathbb{R}_+$ to perturb the sample weights, whereas in adversarial training, this outer loop is devoted to the training of model parameters.

For simplicity, we adopt a widely-used approach in our paper: Performing multiple gradient ascent steps to generate adversarial examples, followed by an additional gradient ascent step over $h$. For a more thorough understanding, please see Algorithm \ref{algo:1}. If we can solve the inner maximization problem nearly optimally, then we can ensure that the sequence generated by Algorithm \ref{algo:1} converges to the global optimal solution. You can find further details in~\citet[Theorem 2]{sinha2018certifying}.

\subsection{Feature stability analysis}
\label{subsec:feature-analysis}
As an additional benefit, if we select an alternative cost function, different from the one proposed in \eqref{eq:cost_f}, our evaluation criterion $\mathfrak R(\beta,r)$ can serve as an effective metric for assessing feature stability within machine learning models. If we want to evaluate the stability of the $i$-th feature, we can modify the distance function $d$ in \eqref{eq:cost_f} as 
\begin{equation*}
    d(z,\hat z)=\|z_{(i)}-\hat z_{(i)}\|_2^2 + \infty\cdot \|z_{(,-i)}-\hat z_{(,-i)}\|_2^2,
\end{equation*}
where $z_{(i)}$ represents the $i$-th feature of $z$, while $z_{(,-i)}=z\backslash z_{(i)}$ denotes all variables in $z$ except for the $i$-th one. This implies that during evaluation, we are only permitted to perturb the $i$-th feature while keeping all other features unchanged.


Substituting $d(z,\hat z)$ in problem \eqref{equ:D-risk-kl}, we could obtain the corresponding feature stability criterion $\mathfrak R_i(\beta,r)$, which provides a quantitative stability evaluation of how robust the model is with respect to changes in the $i$-th feature.
Specifically, a higher value of $\mathfrak R_i(\beta,r)$ indicates greater stability of the corresponding feature against potential shifts.

\section{Visualizations on stylized / toy examples}
\label{sec:toy}

\begin{figure}[t]
  \centering
  \subfloat[Original Dataset]{\includegraphics[width=0.24\textwidth]{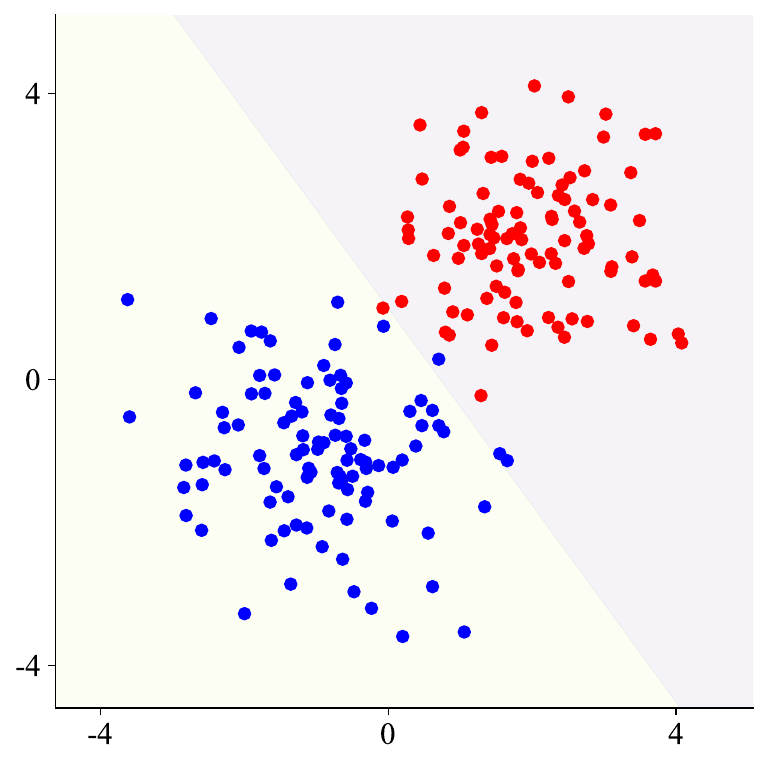}\label{fig1:overall_subfig1}}
  \subfloat[$\theta_1=+\infty,\theta_2=0.2$]{\includegraphics[width=0.24\textwidth]{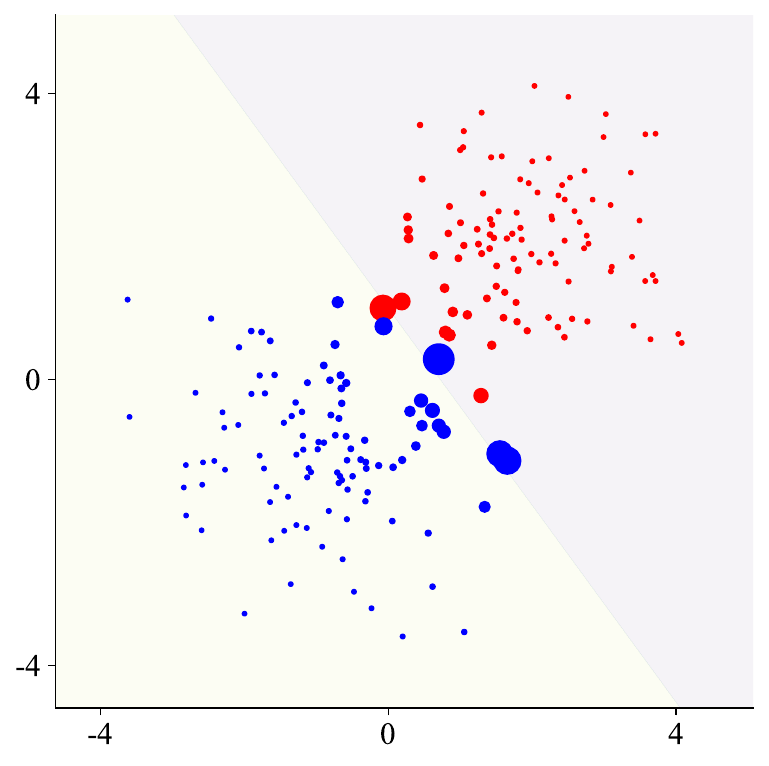}\label{fig1:overall_subfig3}}
  \subfloat[$\theta_1=0.2,\theta_2=+\infty$]{\includegraphics[width=0.24\textwidth]{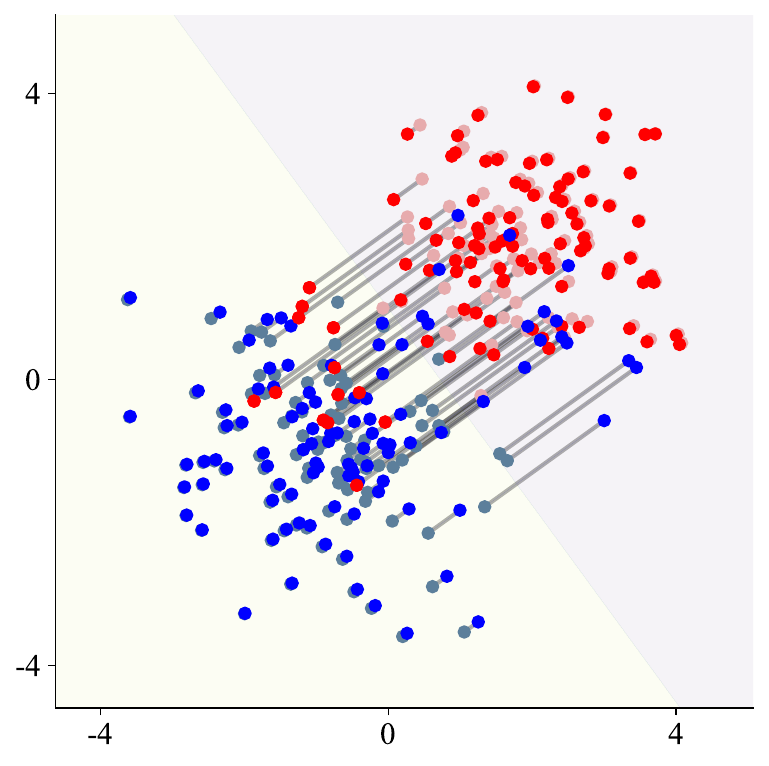}\label{fig1:overall_subfig2}}
  \subfloat[$\theta_1=\theta_2=0.4$]{\includegraphics[width=0.24\textwidth]{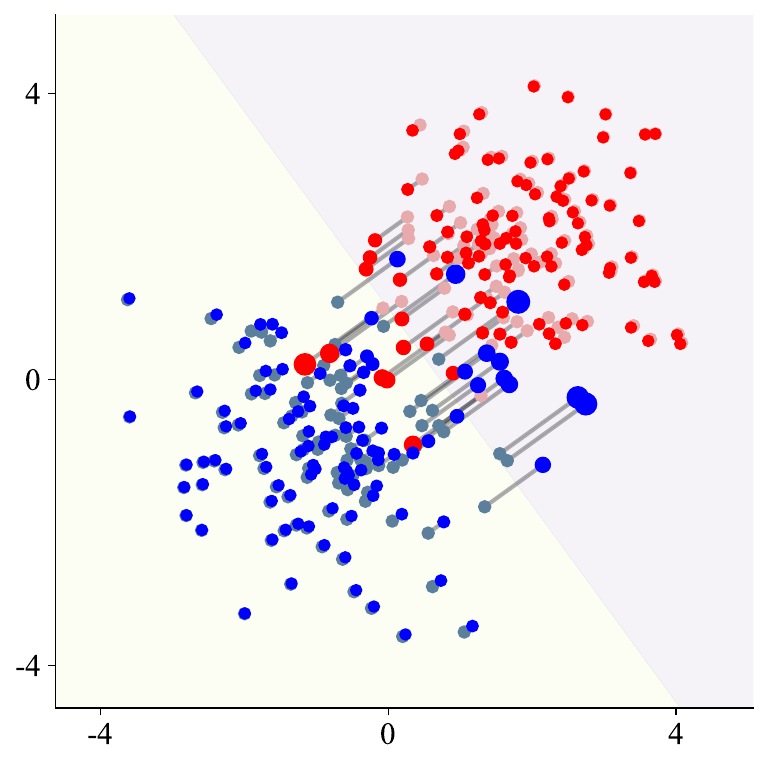}\label{fig1:overall_subfig4}}
\caption{Visualizations of the original dataset and the most sensitive distribution $\Q^\star$ produced by cross-entropy loss function under different $\theta_1,\theta_2$. The original prediction error is $0.1$, and the risk threshold is $0.5$.}
 \label{fig:overall}  
\end{figure}

In this section, we use a toy example to visualize the most sensitive distribution $\Q^\star$ based on Remark \ref{remark:q-star}, which provides intuitive insights into the structure of $\Q^\star$. 

We consider a two-dimensional binary classification problem.
We generate 100 samples for $Y=0$ from distribution $\mathcal N([2,2]^T, I_2)$, and 100 samples for $Y=1$ from distribution $\mathcal N([-1,-1]^T, I_2)$.
The model $f_\beta(\cdot)$ under evaluation is logistic regression (LR).
In this section, we choose $\phi(t)=t\log t - t +1$.
To explore the effects of varying the adjustment parameters, we fix $1/\theta_1+1/\theta_2=5$. We use the cross-entropy loss function, set the risk threshold to be 0.5 (the original loss was 0.1), and solve the problem \eqref{equ:D-risk-kl}.
In Figure \ref{fig1:overall_subfig3}-\ref{fig1:overall_subfig4}, we visualize the most sensitive distribution $\Q^\star$ in each setting, where the decision boundary of $f_\beta(\cdot)$ is indicated by the boundary line, colored points represent the perturbed samples, shadow points represent the original samples, and the size of each point is proportional to its sample weight in $\Q^\star$.
Corresponding with the analysis in Section \ref{subsec:framework}, we have the following observations: 

\begin{enumerate}[label=(\roman*),leftmargin=*]
    \item When $\theta_1=+\infty$, our stability criterion only considers sub-population shifts. From Figure \ref{fig1:overall_subfig3}, we notice a significant increase in weight assigned to a limited number of samples near the boundary. This aligns with the works of \citet{namkoong2022minimax,gupta2023the}, which emphasize tail performance analysis.
    \item When $\theta_2 = +\infty$, the stability criterion only considers data corruptions. From Figure \ref{fig1:overall_subfig2}, a significant number of samples are severely perturbed to adhere to the predefined risk threshold.
    \item When $\theta_1=\theta_2=0.4$, in Figure \ref{fig1:overall_subfig4}, a more balanced $\Q^\star$ is observed, reflecting the incorporation of both data corruptions and sub-population shifts. 
    This showcases a scenario where samples undergo moderate and reasonable perturbations, and the sensitive distribution is not disproportionately concentrated on a limited number of samples. Such a distribution is a more holistic and reasonable approach to evaluating stability in practice, taking into account a broader range of potential shifts.
\end{enumerate}

Furthermore, we showcase the most sensitive distributions with 0/1 loss.
We set the error rate threshold $r$ to be 30\%.
The results are shown in Figure \ref{fig:overall2}.
From the results, we have the following observations:
\begin{enumerate}[label=(\roman*),leftmargin=*]
    \item Similar to the phenomenon above, when $\theta_2=+\infty$, the stability criterion only considers data corruptions; and when $\theta_1=+\infty$, it only considers sub-population shifts.
    \item Different from Figure \ref{fig:overall}, since we use 0/1 loss here, the perturbed samples are all near the boundary.
\end{enumerate}

\begin{figure}[t]
  \centering
  \subfloat[Original Dataset]{\includegraphics[width=0.24\textwidth]{original.pdf}\label{fig:2overall_subfig1}}
  \hfill
  \subfloat[$\theta_1=1.0,\theta_2=0.25$]{\includegraphics[width=0.24\textwidth]{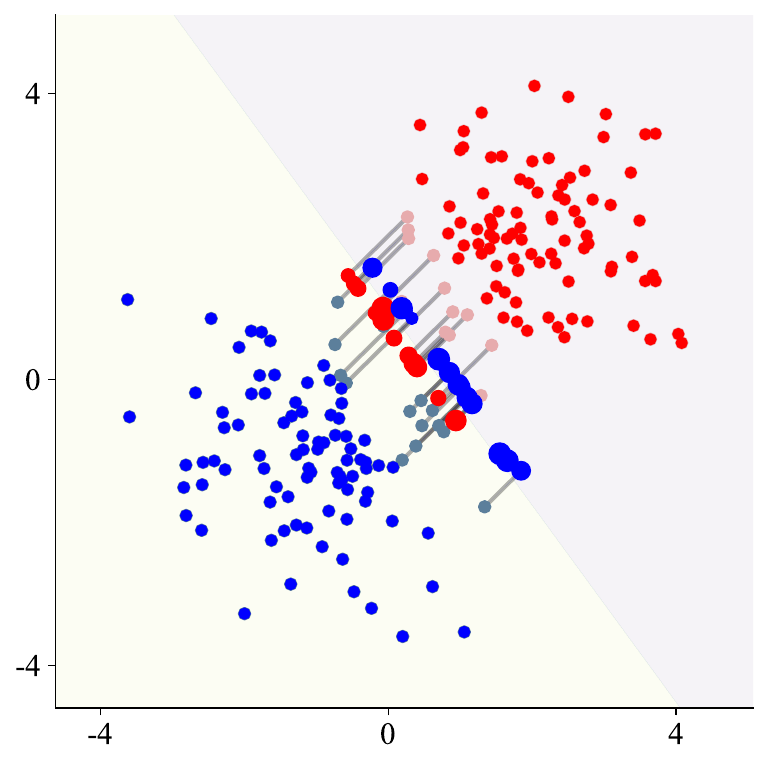}\label{fig:2overall_subfig3}}
  \hfill
  \subfloat[$\theta_1=0.2,\theta_2=+\infty$]{\includegraphics[width=0.24\textwidth]{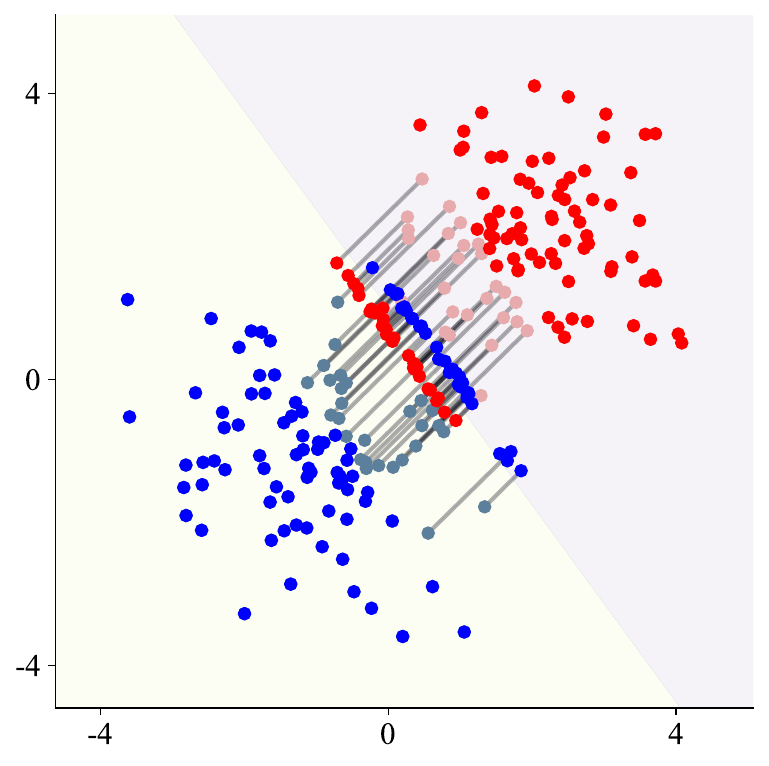}\label{2fig:overall_subfig4}}
  \hfill
  \subfloat[$\theta_1=+\infty,\theta_2=0.2$]{\includegraphics[width=0.24\textwidth]{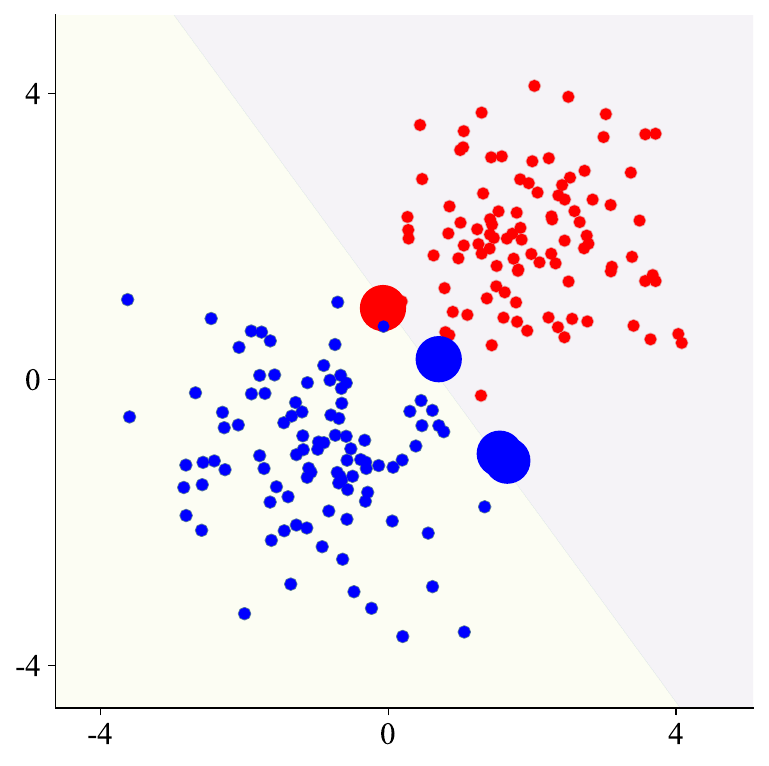}\label{2fig:overall_subfig5}}
\caption{Visualizations of the original dataset and the most sensitive distribution $\Q^\star$ with \emph{0/1} loss function under different $\theta_1,\theta_2$. The original prediction error rate is $1\%$, and the error rate threshold $r$ is set to $30\%$.}
 \label{fig:overall2}  
\end{figure}

\begin{figure}[t]
  \centering
  \subfloat[Error rate $r=20\%$]{\includegraphics[width=0.24\textwidth]{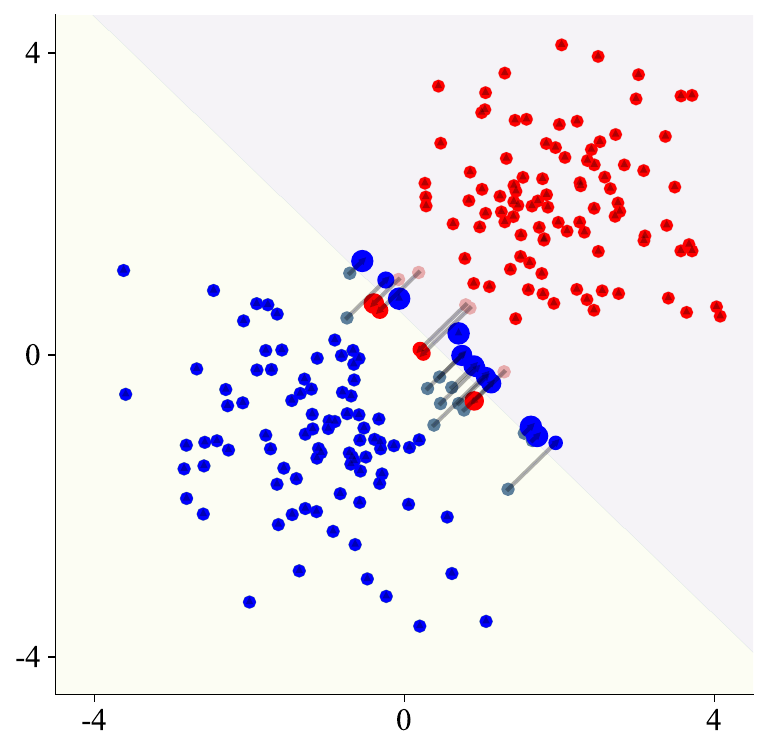}}
  \hfill
  \subfloat[Error rate $r=40\%$]{\includegraphics[width=0.24\textwidth]{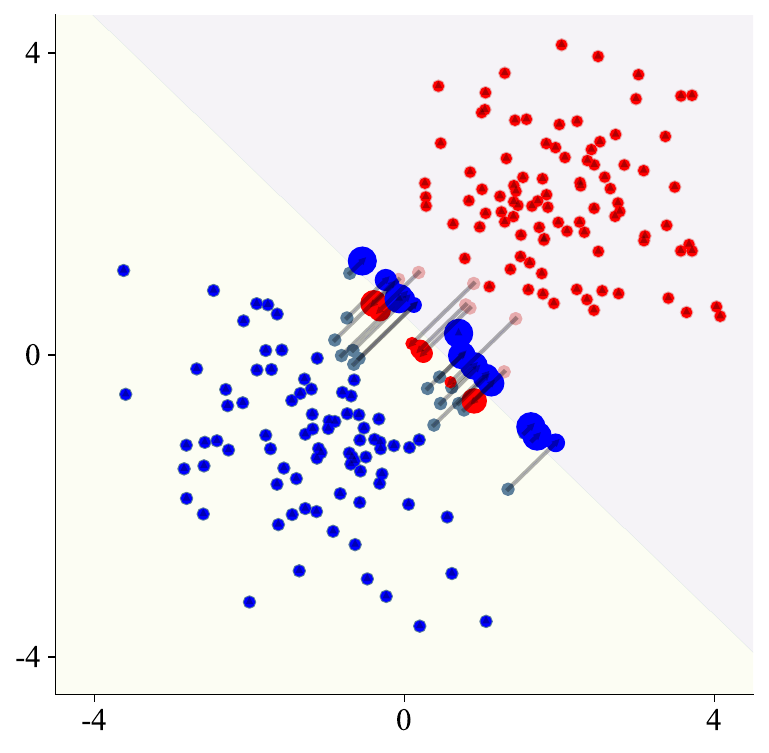}}
  \hfill
  \subfloat[Error rate $r=60\%$]{\includegraphics[width=0.24\textwidth]{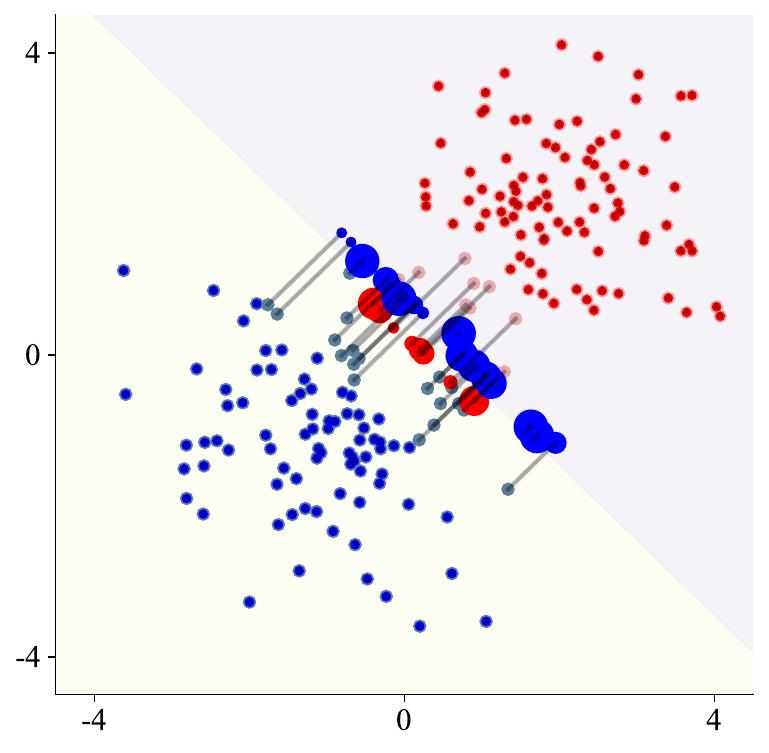}}
  \hfill
  \subfloat[Error rate $r=80\%$]{\includegraphics[width=0.24\textwidth]{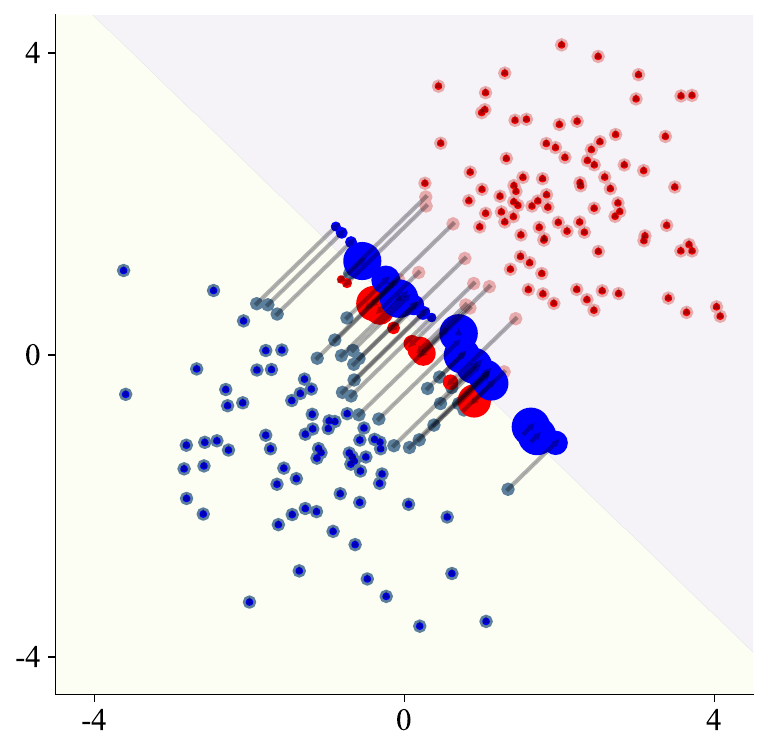}}
\caption{Visualizations of the most sensitive distribution $\Q^\star$ with \emph{0/1} loss function under different error rate threshold. We set $\theta_1=1.0$ and $\theta_2=0.25$ here.}
 \label{fig:toy-varying-r}  
\end{figure}

For fixed $\theta_1$ and $\theta_2$, we vary the error rate threshold $r$ and visualize the most sensitive distribution $\Q^\star$ in Figure \ref{fig:toy-varying-r}.
We set $\theta_1=1.0$ and $\theta_2=0.25$, and our stability criterion will consider both data corruptions and sub-population shifts.

Finally, in Figure \ref{fig:toy-convergence}, we plot the curve of $\mathbb{E}_{\Q^{(t)}}[W\cdot \ell(\beta,Z)]$ with respect to the epoch number $t$.
From the results, it's evident that the infeasibility error of the sequence generated by our algorithm tends towards zero. This implies that the final expectation over the most sensitive distribution $\Q^{(T)}$ will match the predefined threshold $r$.

\begin{figure}[htbp]
  \centering
  \subfloat[General nonlinear loss function.]{\includegraphics[width=0.35\textwidth]{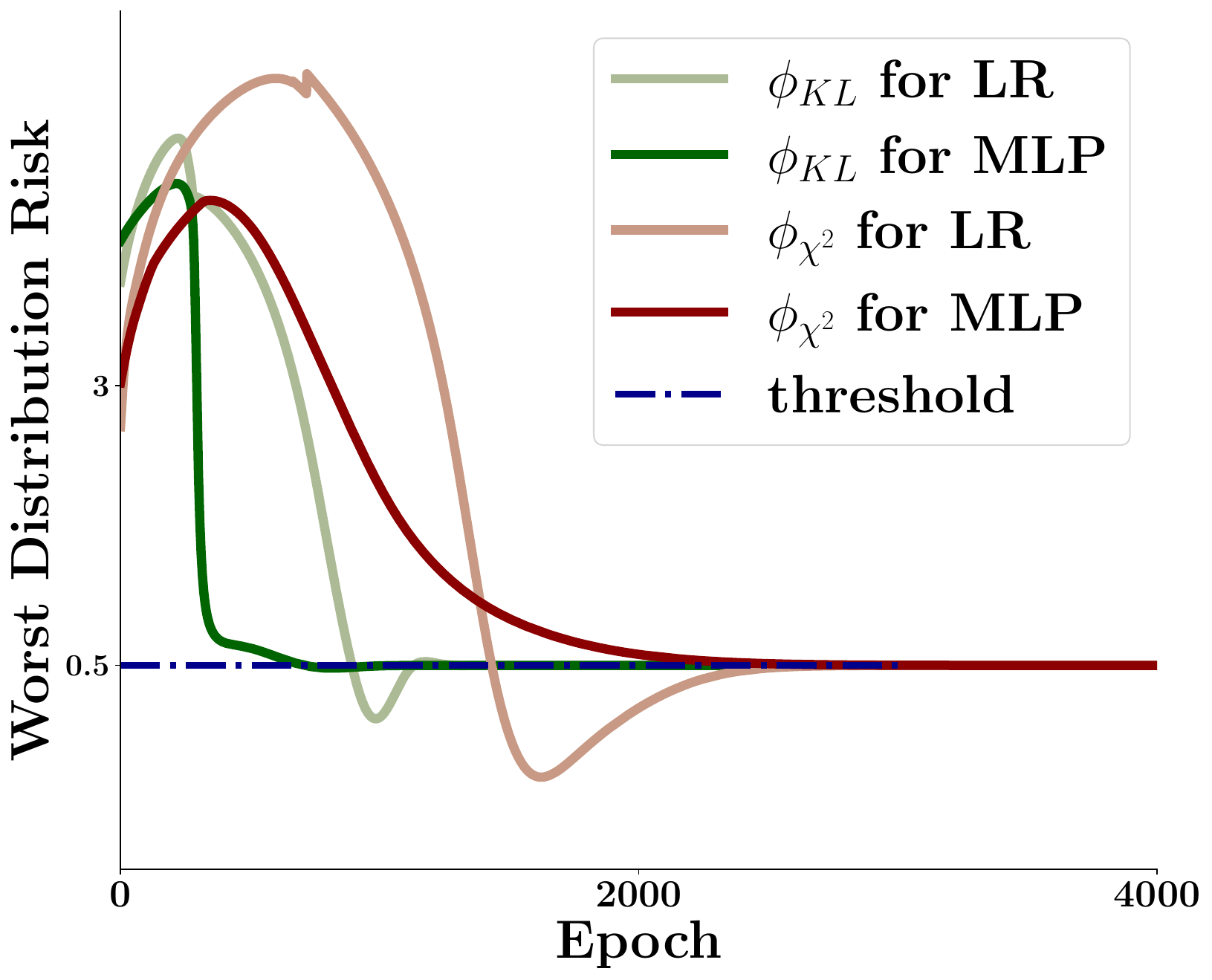}}
  \hspace{2em}
  \subfloat[0/1 loss function.]{\includegraphics[width=0.35\textwidth]{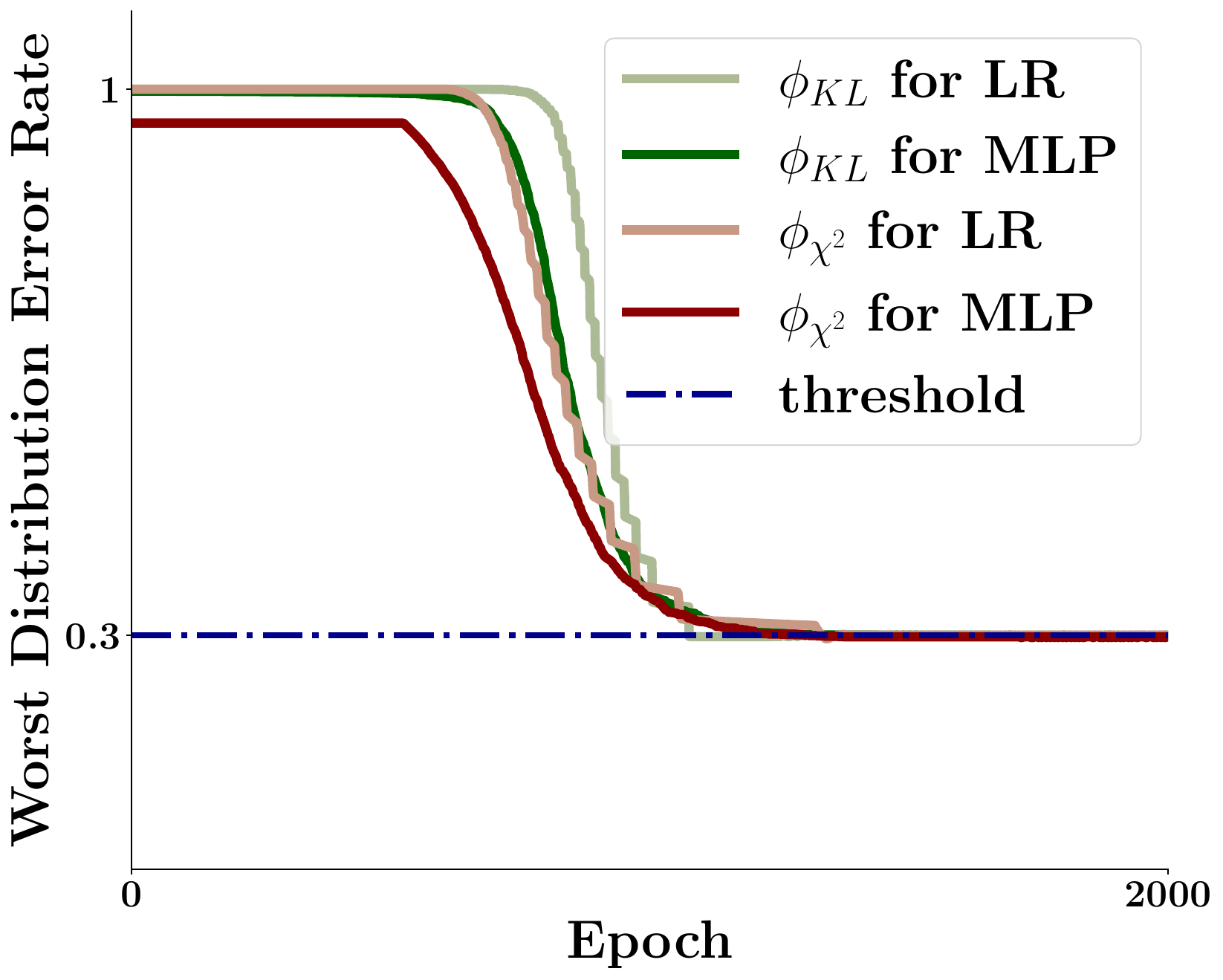}}
\caption{The convergence of $\mathbb{E}_{\Q^{(t)}}[W\cdot \ell(\beta,Z)]$ w.r.t. epoch $t$. (a): Use general nonlinear loss function (cross-entropy loss) with $r=0.5$. (b): Use 0/1 loss function with $r=30\%$. Here $\phi_{\text{KL}}$ denotes $\phi(t)=t\log t - t+1$, and $\phi_{\chi^2}$ denotes $\phi(t)=(t-1)^2$.}
 \label{fig:toy-convergence}  
\end{figure}

\section{Experiments}
\label{sec:exp}

In this section, we explore real-world applications to show the practical effectiveness of our stability evaluation criterion, including how this criterion can be utilized to compare the stability of both models and features, and to inform strategies for further enhancements.\\

\noindent \textbf{Datasets.}\quad We use three real-world datasets, including ACS Income dataset, ACS Public Coverage dataset, and COVID-19 dataset.
\begin{itemize}[leftmargin=*]
    \item \textbf{ACS Income dataset}. The dataset is based on the American Community Survey (ACS) Public Use Microdata Sample (PUMS)~\citep{ding2022retiring}. The task is to predict whether an individual’s income is above \$50,000. We filter the dataset to only include individuals above the age of 16, usual working hours of at least 1 hour per week in the past year, and an income of at least \$100. The dataset contains individuals from all American states, and we focus on California (CA) in our experiments. We follow the data pre-processing procedures in~\citet{liu2021towards}. The dataset comprises a total of 76 features, with the majority of categorical features being one-hot encoded to facilitate analysis. In our experiments, we sample 2,000 data points from CA for model training, and another 2,000 for evaluation. When involving algorithmic interventions in Section \ref{subsec:feature}, we further sample 5,000 points to compare the performances of different algorithms.
    \item \textbf{ACS Public Coverage dataset}. The dataset is also based on  ACS PUMS~\citep{ding2022retiring}. The task is to predict whether an individual has public health insurance. We focus on low-income individuals who are not eligible for Medicare by filtering the dataset to only include individuals under the age of 65 and with an income of less than \$30,000.
    Similar to the ACS Income dataset,  we focus on individuals from CA in our experiments. We follow the data pre-processing procedures in~\citet{liu2021towards}. The dataset comprises a total of 42 features, with the majority of categorical features being one-hot encoded to facilitate analysis. In our experiments, we sample 2,000 data points from CA for model training, and another 2,000 for evaluation. When involving algorithmic interventions in Section \ref{subsec:feature}, we further sample 5,000 points to compare the performances of different algorithms.
    \item \textbf{COVID-19 dataset}. The COVID-19 dataset contains COVID patients from Brazil, which is based on SIVEP-Gripe data~\citep{baqui2020ethnic}. It has 6882 patients from Brazil recorded between Februrary 27-May 4, 2020. There are 29 features in total, including comorbidities, symptoms, and demographic characteristics. The task is to predict the mortality of a patient, which is a binary classification problem.
    In our experiments, we split the dataset with a ratio of 1:1 for training and evaluation sets.
\end{itemize}

\noindent Throughout the experiments, we set $1/\theta_1+1/\theta_2=5$ for adjustment parameters $\theta_1$ and $\theta_2$.\\

\noindent\textbf{Algorithms under evaluation}\quad 
Before presenting experimental results, we will initially introduce the formulations of various algorithms used to evaluate the effectiveness of their interventions.
In Section \ref{subsec:model}, we evaluate Adversarial Training (AT) ~\citet{sinha2018certifying} and Tilted ERM~\citep{li2023tilted}. 
In Section \ref{subsec:feature}, we introduce the Targeted AT. Here are their mathematical formulations: 
\begin{enumerate}[label=(\roman*),leftmargin=*]
    \item AT: 
    \begin{equation}
        \min_{\beta}\bigg\{\mathbb{E}_{\P_0}[\phi_{\gamma}(\beta,Z)] := \mathbb{E}_{\P_0}\left[\sup_{z\in\Zscr} \ell(\beta,Z)-\gamma c(Z,\hat Z)\right]\bigg\},
    \end{equation}
    where $c(z,\hat z)=\|x-\hat x\|_2^2 + \infty\cdot |y-\hat y|$, and $\gamma$ is the penalty hyper-parameter.
    \item Tilted ERM:
    \begin{equation}
        \min_{\beta} t\log\bigg(\mathbb{E}_{\P_0}\left[\exp\left(\frac{\ell(\beta,Z)}{t}\right)\right]\bigg),
    \end{equation}
    where $t$ is the temperature hyper-parameter.
    \item Targeted AT: 
    \begin{equation}
    \min_{\beta}\bigg\{\mathbb{E}_{\P_0}[\phi_{\gamma}(\beta,Z)] = \mathbb{E}_{\P_0}\left[\sup_{z\in\Zscr} \ell(\beta,Z)-\gamma c(Z,\hat Z)\right]\bigg\}.
    \end{equation}
    In this case, $c(z,\hat z)=\|z_{(i)}-\hat z_{(i)}\|_2^2 + \infty\cdot \|z_{(,-i)}-\hat z_{(,-i)}\|_2^2$, where $z_{(i)}$ denotes the target feature of $z$, $z_{(,-i)}$ denotes all the other features and 
    $\gamma$ is the penalty hyper-parameter.
    By choosing this $c(z,\hat z)$, the targeted AT will only perturb the target feature while keeping the others unchanged.
\end{enumerate}

\noindent\textbf{Training Details.}\quad 
In our experiments, we use LR for linear model and a two-layer MLP for neural network.
We use PyTorch Library~\citep{paszke2019pytorch} throughout our experiments. 
The number of hidden units of MLP is set to 16. 
As for the models under evaluation in Section \ref{sec:exp}, \textrm{(i)} for AT, we vary the penalty parameter $\gamma\in\{0.1,0.2,\dots,1.0\}$ and select the best $\gamma$ according to the validation accuracy. The inner number of inner optimization iterates is set to 20;
\textrm{(ii)} for Tilted ERM, we vary the temperature parameter $t\in\{0.1,0.2,\dots,1.0\}$ and select the best $t$ according to the validation accuracy.
Throughout all experiments, the ADAM optimizer with a learning rate of $1e^{-3}$ is used.
All experiments are performed using a single NVIDIA GeForce RTX 3090.

\subsection{Model stability analysis}
\label{subsec:model}
In this section, we first provide more in-depth empirical analyses of our proposed criterion, and demonstrate how it can reflect a model's stability with respect to data corruptions and sub-population shifts.
We focus on the income prediction task for individuals from CA, using the ACS Income dataset.

\vspace{2mm}
\noindent\textbf{Excess risk decomposition.}\quad 
Recall that our stability evaluation misleads the model to a pre-defined risk threshold by perturbing the original distribution $\P_0$ in two ways, i.e. data corruptions and sub-population shifts.
Based on the optimal solutions  $\Q^\star\in\mathcal{P}(\Zscr\times \Wscr)$ of problem \eqref{eq:primal}, we can compute the excess risk $\Delta=\mathbb{E}_{\Q^\star}[W\cdot \ell(\beta,Z)]-\mathbb{E}_{\P_0}[\ell(\beta,Z)]$ into two parts satisfying $\Delta=\Delta_{\mathrm{I}}+\Delta_{\mathrm{II}}$:

\begin{equation}
\label{equ:decomposition}
\begin{aligned}
    \Delta_{\mathrm{I}} &\coloneqq \mathbb{E}_{\Q^\star_{Z}}[\ell(\beta,Z)]-\mathbb{E}_{\P_0}[\ell(\beta,Z)],\\
    \Delta_{\mathrm{II}} &\coloneqq \mathbb{E}_{\Q^\star}[W\cdot\ell(\beta,Z)]-\mathbb{E}_{\Q^\star_Z}[\ell(\beta,Z)],
\end{aligned}
\end{equation}
 where $\Q^\star_{Z}$ is the marginal distribution of $\Q^\star$ w.r.t $Z$. Thus, from this decomposition, we can see that $\Delta_{\mathrm{I}}$ denotes the excess risk induced by data corruptions (data samples $\hat z\rightarrow z$), and $\Delta_{\mathrm{II}}$ denotes that induced by sub-population shifts (probability density $1\rightarrow w$).
In this experiment, for a MLP model trained with empirical risk minimization (ERM), we use the cross-entropy loss and set the risk threshold to be 3.0.
In Figure \ref{fig:income_decomposition}, we vary the $\theta_1$ and $\theta_2$ and plot the $\Delta_{\mathrm I},\Delta_{\mathrm{II}}$ in each setting.
The results align with our theoretical understanding that a decrease in $\theta_1$ leads our evaluation method to place greater emphasis on data corruptions. 
Conversely, a reduction in $\theta_2$ shifts the focus of our evaluation towards sub-population shifts. 
This observation confirms the adaptability of our approach in weighing different types of distribution shifts based on the values of $\theta_1$ and $\theta_2$.

\begin{figure*}[h]
  \centering
\subfloat[Excess risk decomposition.]{\includegraphics[width=0.32\textwidth, height=0.21\textwidth]{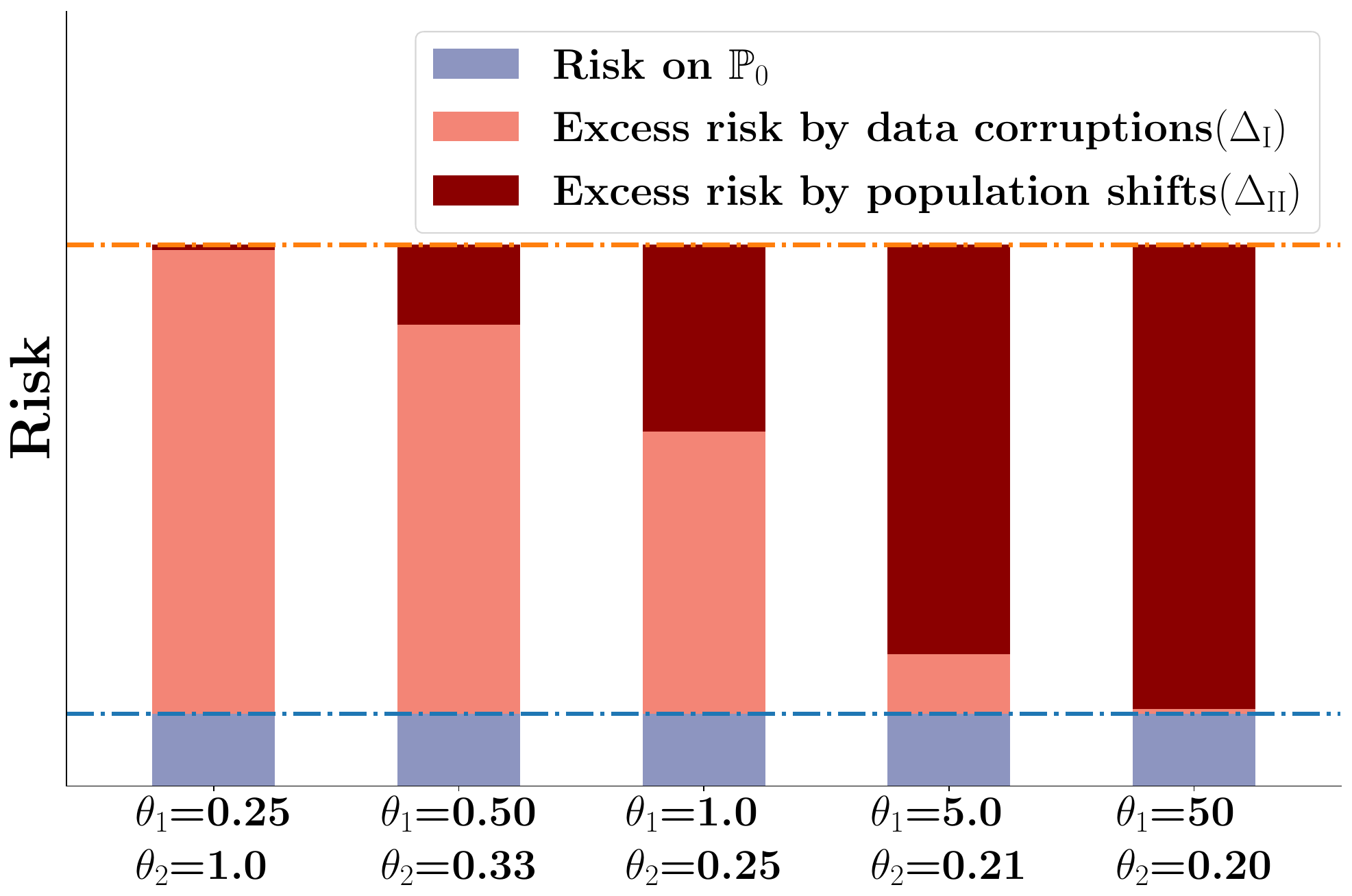}\label{fig:income_decomposition}}
\subfloat[Convergence of $\mathbb{E}_{\Q^{(t)}}[W\cdot \ell(\beta,Z)\text{]}$ w.r.t. epoch $t$.]{\includegraphics[width=0.32\textwidth, height=0.21\textwidth]{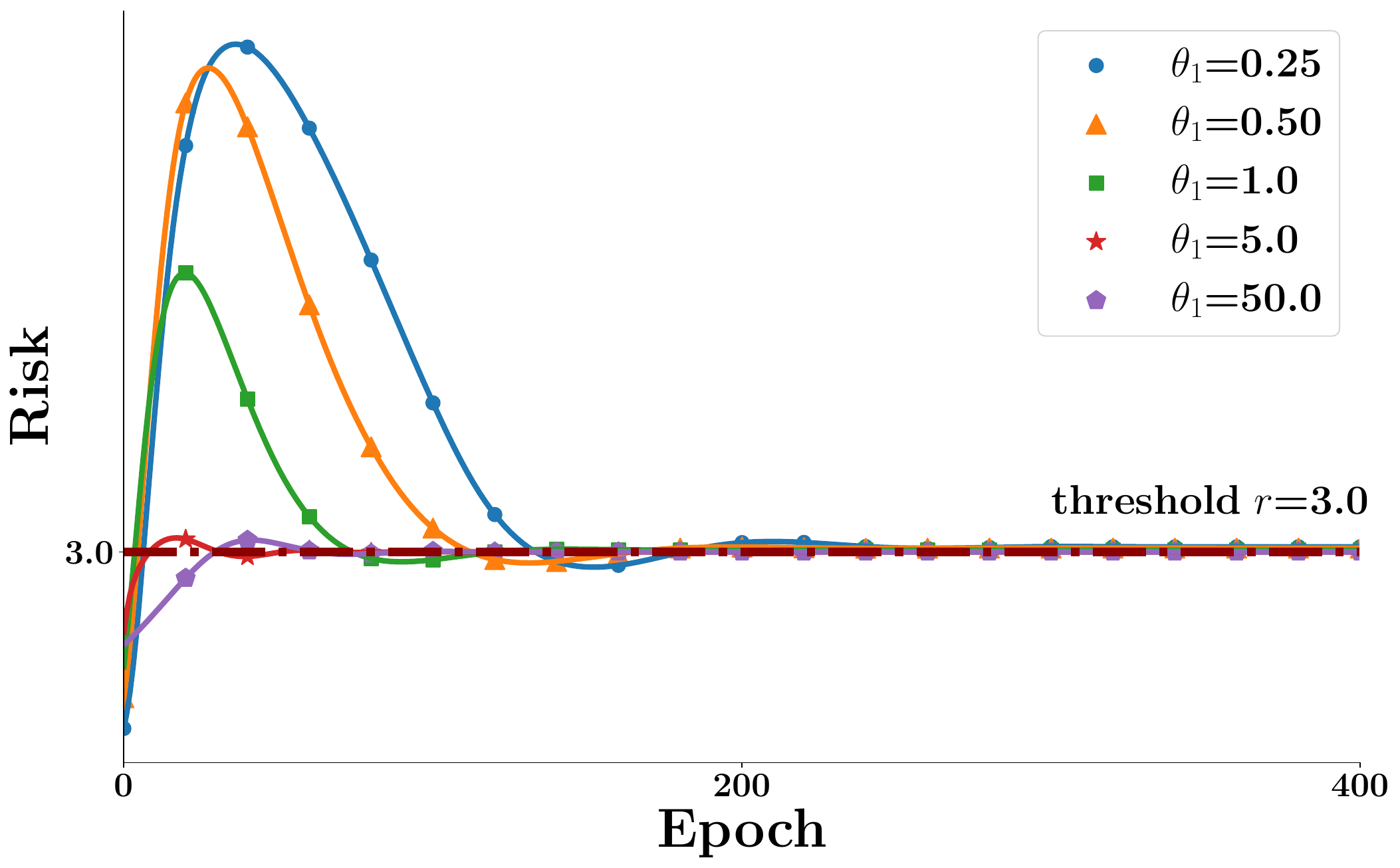}\label{fig:income_convergence}}
\subfloat[Stability measure.]{\includegraphics[width=0.32\textwidth, height=0.21\textwidth]{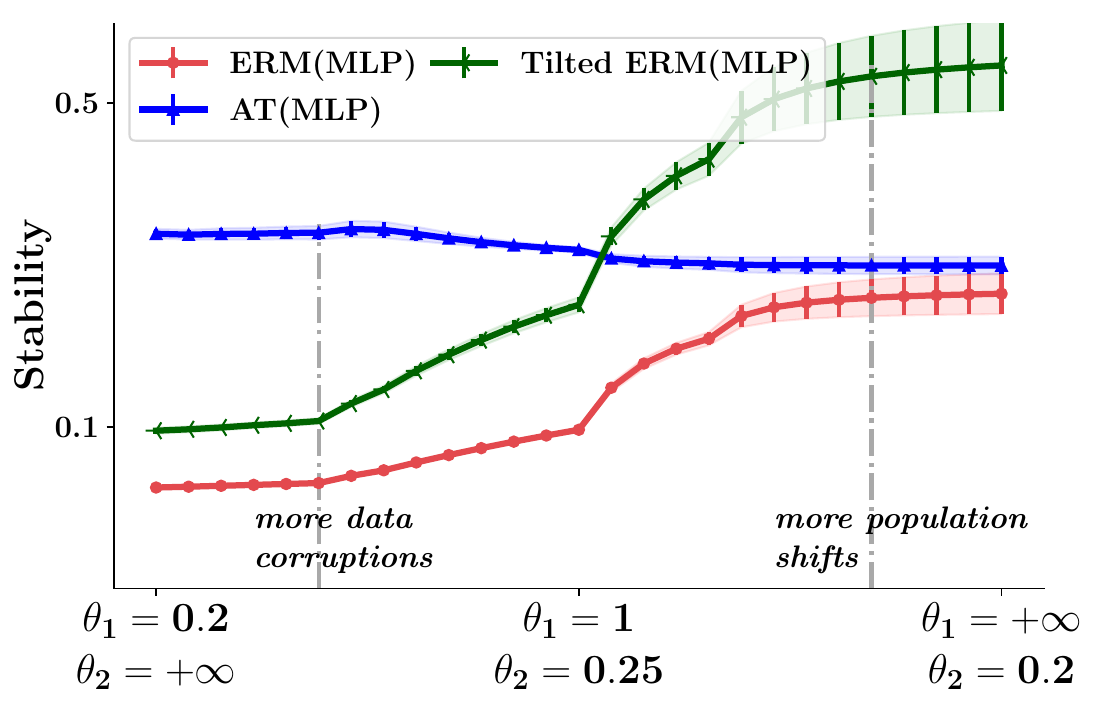}\label{fig:income_general_mlp}}

  \caption{Results of the income prediction task. (a): The excess risk decomposition under different values of $\theta_1$ and $\theta_2$ according to~\eqref{equ:decomposition}.
  (b): The curve of the risk on the most sensitive distribution $\Q^\star$ during optimization for different choices of $\theta_1$ and $\theta_2$, which converge to the pre-defined risk threshold. 
  The models under evaluation in (a) and (b) are both ERM (MLP).
  (c): The stability measure for MLP models trained with ERM, AT, and Tilted ERM, under varying $\theta_1$ and $\theta_2$. Error bars denote the standard deviations over multiple runs.}
 \label{fig:income_prediction}  
\end{figure*}
\noindent\textbf{Convergence of our optimization algorithm.}\quad 
In Figure~\ref{fig:income_convergence}, we plot the curve of the risk on $\Q^{(t)}$ w.r.t. the epoch number $t$ throughout the optimization process.
For different values of $\theta_1$ and $\theta_2$, we observe that the risk consistently converges to the pre-defined risk threshold of $r=3.0$.
This empirical observation is in agreement with our theoretical investigation, demonstrating the reliability and effectiveness of our optimization approach.

\vspace{2mm}
\noindent\textbf{Reflection of stability.}\quad 
We then proceed to compare the stability of MLP models trained with three well-established methods, including ERM, AT, and Tilted ERM.
AT is specifically designed to enhance the model's resilience to data corruptions, whereas Tilted ERM, through its use of the log-sum-exp loss function, aims to prioritize samples with elevated risks, potentially enhancing stability in the presence of sub-population shifts.
For our analysis, we set the risk threshold $r$ to 3.0, vary $\theta_1$ and $\theta_2$, and plot the resulting stability measure $\mathfrak R(\beta,3.0)$ for each method.

From Figure \ref{fig:income_general_mlp}, we have the following observations:
(i) Both robust learning methods exhibit markedly higher stability compared to ERM;
(ii) AT exhibits greater stability in the context of data corruptions, while Tilted ERM shows superior performance in scenarios involving sub-population shifts. 
These findings align with our initial hypotheses regarding the strengths of these methods;
(iii) Furthermore, the results suggest that robust learning methods tailored to specific types of distribution shifts \emph{may face challenges in generalizing to other contexts}. 
Therefore, accurately identifying the types of shifts to which a model is most sensitive is crucial in practice, as it can inform machine learning engineers on strategies to further refine and improve the model's robustness and efficacy.
This insight underscores the significance of our proposed stability evaluation framework. 
It offers a comprehensive and unified perspective on a model's stability across various types of distribution shifts, enabling a more holistic understanding and strategic approach to enhancing model robustness and reliability.

Furthermore, the results of models' stability on the ACS PubCov dataset and the COVID-19 dataset are shown in Figure \ref{fig:more-model-analysis}.
We can observe similar phenomenon as the ACS Income dataset: 
\begin{figure}[h]
  \centering
  \subfloat[PubCov dataset.]{\includegraphics[width=0.45\textwidth]{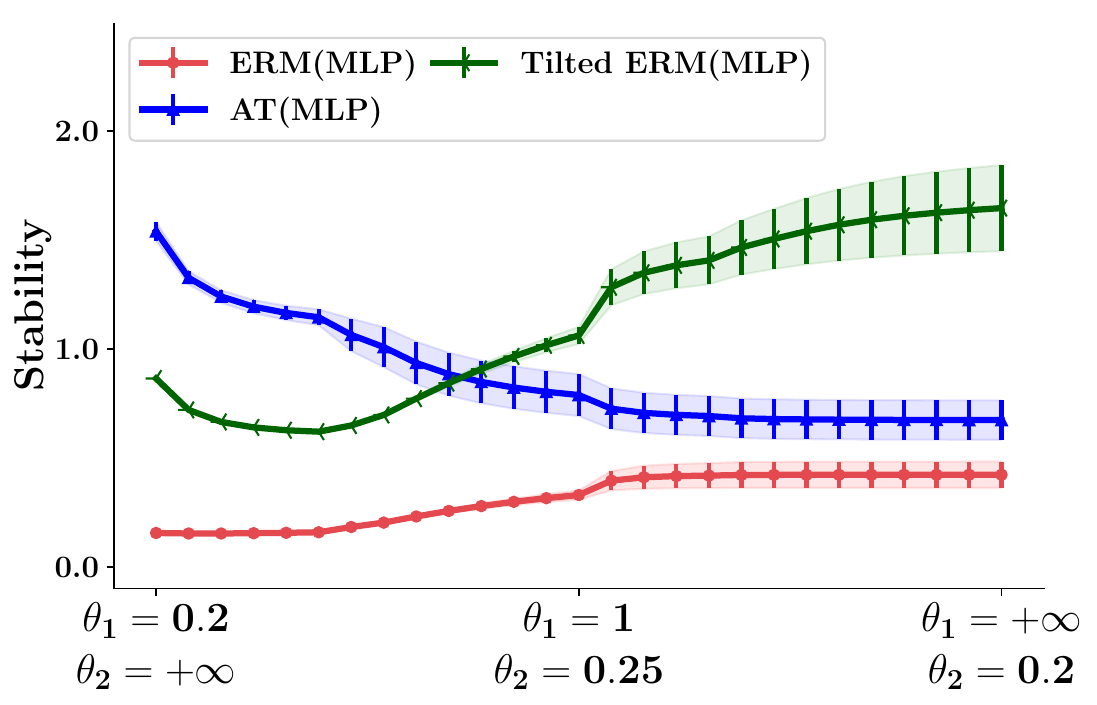}}
  \subfloat[COVID-19 dataset.]{\includegraphics[width=0.45\textwidth]{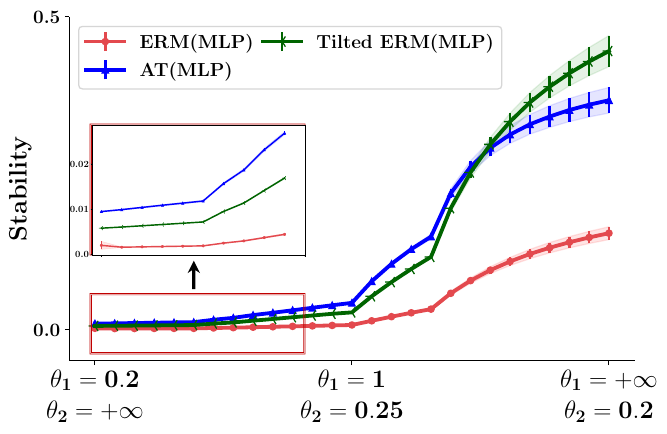}}
  \caption{The stability measure for MLP models trained with ERM, AT, and Tilted ERM on ACS PubCov dataset and COVID-19 dataset.}
\label{fig:more-model-analysis}  
\end{figure}
\vspace{-2mm}
\begin{enumerate}[label=(\roman*),leftmargin=*]
    \item When $\theta_1$ is small, our stability measure pays more attention to data corruptions. Therefore, AT performs better than Tilted ERM and ERM.
    \item When $\theta_2$ is small, the main focus shifts to population shifts, where Tilted ERM is more preferred.
\end{enumerate}

Besides, it is noteworthy that the standard deviation of the stability measure estimation increases as $\theta_1$ approaches infinitely (we set it to 100 in our experiments).
When fixing the evaluation data,  the standard deviations—indicating the randomness inherent to our computational algorithm—are relatively small.
This observation points to the randomness of sampling as the primary factor.
Furthermore, the introduction of \(\theta_1=+\infty\) brings a statistical cost in calculating the stability measure, as demonstrated in~\citet{namkoong2022minimax}.

\subsection{Feature stability analysis}
\label{subsec:feature}

Building upon our previous findings, we further investigate the applicability of feature stability analysis across multiple prediction tasks, including income, insurance, and COVID-19 mortality prediction. 
By examining feature stability, we gain valuable insights into the specific attributes that significantly influence model performance.
It provides a  principle approach to enhance our understanding of the risky factors contributing to overall model instability, and thereby helps to \emph{discover potential discriminations and improve model robustness and fairness}.
Throughout all the experiments, we use 0/1 loss function and set the error rate threshold $r$ to be $40\%$. 
The adjustment parameter $\theta_1$ is set to 1.0, and $\theta_2$ is 0.25.\\
\begin{figure*}[htbp]
  \centering
\subfloat[\emph{Income} Prediction: Feature Stability]{\includegraphics[width=0.45\textwidth]{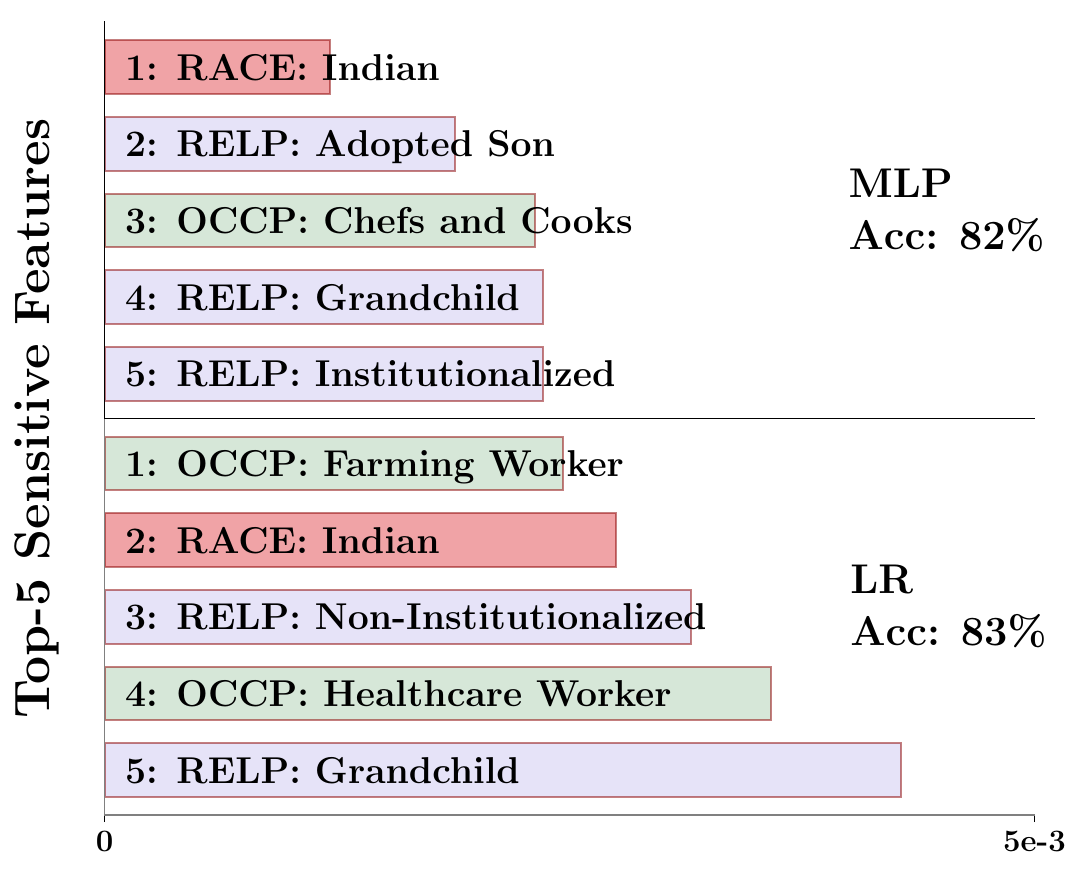}\label{fig:income_feature}}
    \hfill
  \subfloat[\emph{Income} Prediction: Worst Group Acc]{\includegraphics[width=0.45\textwidth]{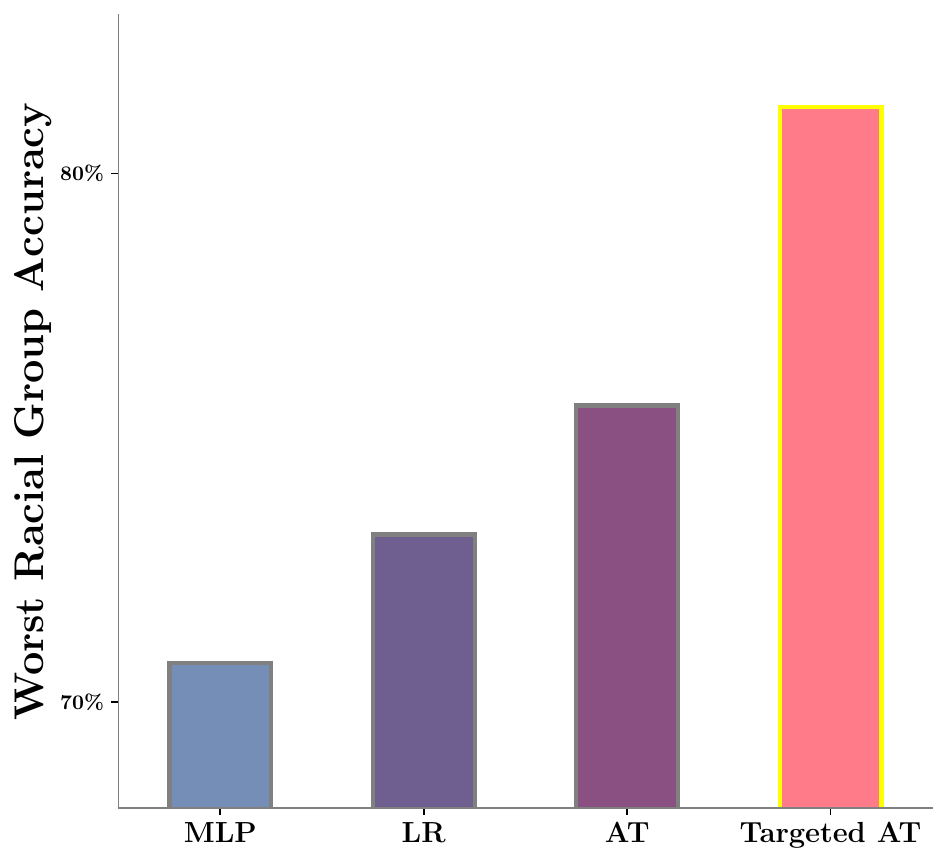}\label{fig:income_algorithmic_intervention}}
  \hfill
  \subfloat[\emph{PubCov} Prediction: Feature Stability]{\includegraphics[width=0.45\textwidth]{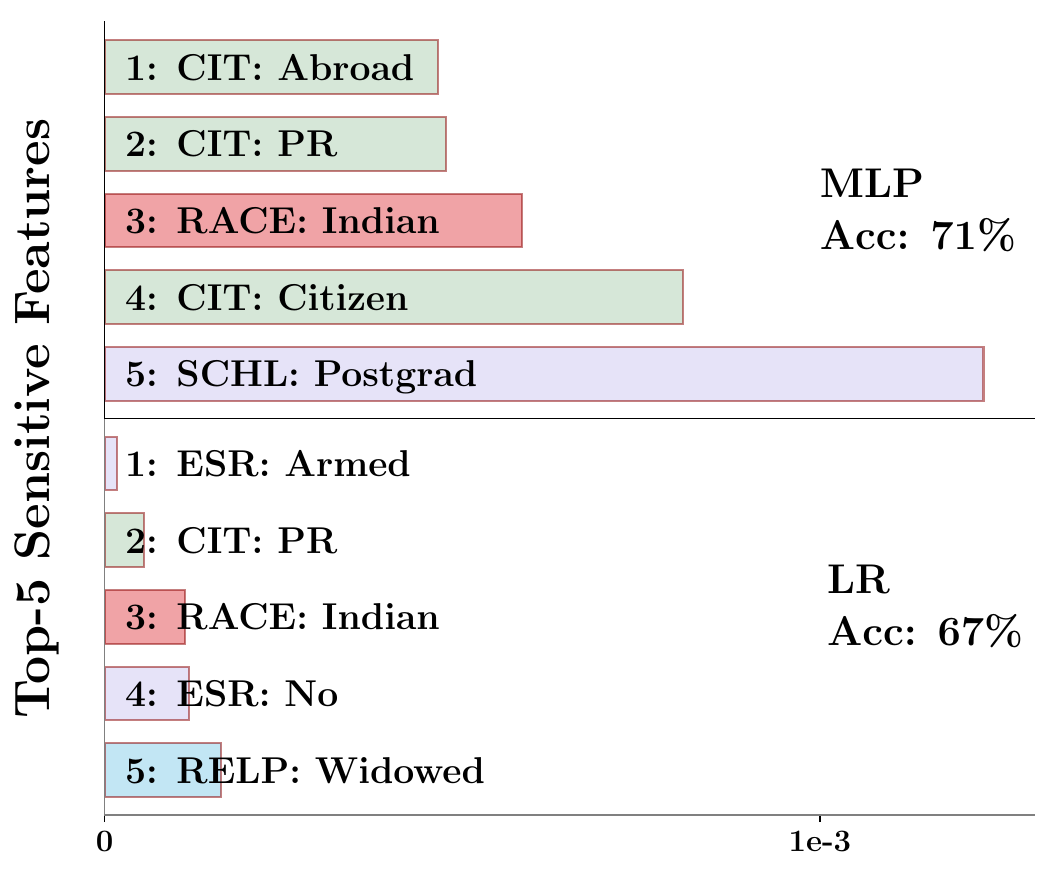}\label{fig:pubcov_feature}}
    \hfill
  \subfloat[\emph{PubCov} Prediction: Worst Group Acc]{\includegraphics[width=0.45\textwidth]{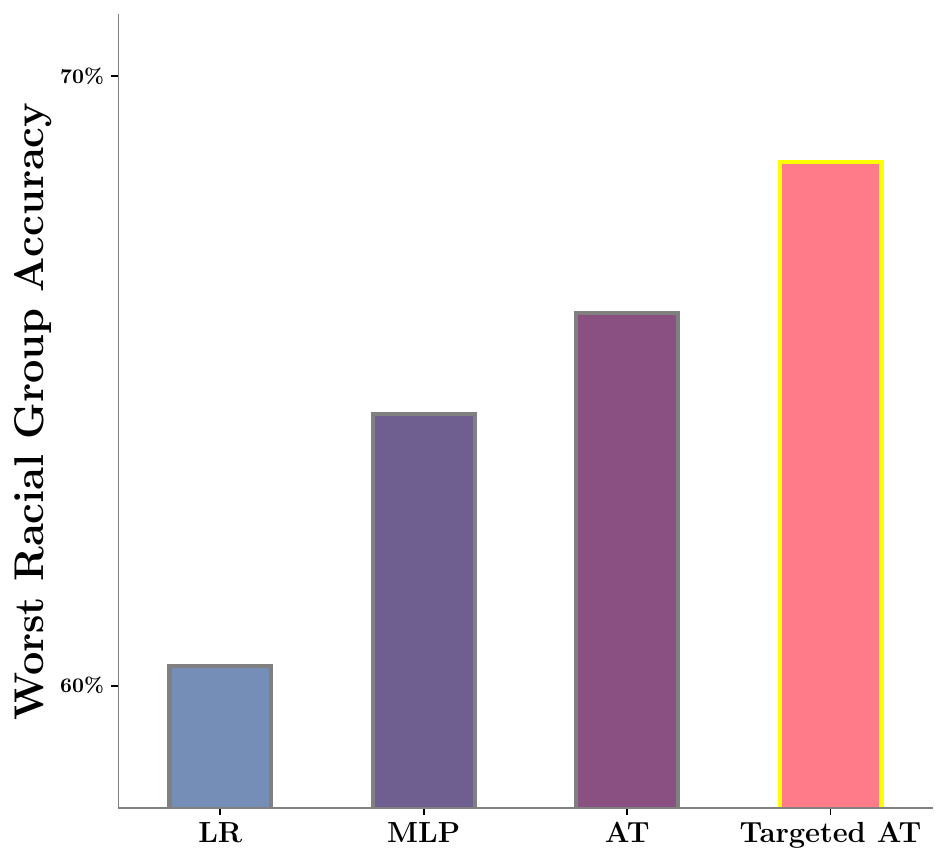}\label{fig:pubcov_algorithmic_intervention}}
\caption{Feature sensitivity analysis for income prediction and public coverage prediction. Figure (a) and (c): the top-5 sensitive feature scores for MLP and LR in the income prediction and the public coverage (PubCov) prediction tasks, where a smaller score means the corresponding feature is more sensitive. Figure (b) and (d): the worst racial group accuracy for MLP, LR, AT, and targeted AT in the income prediction and the public prediction tasks.}
 \label{fig:overall_feature}  
\end{figure*}

\noindent\textbf{Income prediction}

\vspace{2mm}
\noindent We sample 2,000 data points from ACS Income dataset for training, an additional 2,000 points for the evaluation set, and a further 5,000 points to test the effectiveness of algorithmic interventions.
For both the LR model and the MLP model, trained using ERM, we use the evaluation set to compute the feature sensitivity measure $\mathfrak{R}_i(\beta,r)$ for each feature as outlined in Section \ref{subsec:feature-analysis}.
The top-5 most sensitive features for each model – MLP and LR – are displayed in Figure \ref{fig:income_feature}. In these visualizations, distinct colors are assigned to different types of features for clarity; for example, red is used to denote racial features, while green indicates occupation features. 
From the results, we observe that:
\textrm{(i)} When the performances are similar (82\% v.s. 83\%), the LR model is less sensitive to input features, compared with the MLP model, which corresponds with the well-known Occam’s Razor.
\textrm{(ii)} Interestingly, our stability criterion reveals that both the MLP and LR models exhibit a notable sensitivity to the racial feature ``American Indian". 
This raises concerns regarding potential \emph{racial discrimination} and \emph{unfairness} towards this specific demographic group. 
It is important to highlight that an individual's race should not be a determinant factor in predicting their income, and the heightened sensitivity to this feature suggests a need for careful examination and potential mitigation of biases in the models before deployment.

Building on our initial observations, we conduct an in-depth analysis of the accuracy across different racial groups for both the LR and MLP models. 
The findings, as shown in Figure \ref{fig:income_algorithmic_intervention}, align with our earlier feature stability results. 
Notably, the accuracy for the worst-performing racial group is significantly lower compared to other groups (for instance, a decrease from 82\% to 72\% in the case of the MLP model).
Such findings indicate that both the LR and MLP models, when trained using ERM, exhibit unfairness towards minority racial groups. 
In light of these insights, our feature stability analysis serves as a valuable tool to identify and prevent the deployment of models that may perpetuate such disparities in practice.

Subsequently, we use adversarial training as an algorithmic intervention to enhance model performance. 
Figure \ref{fig:income_algorithmic_intervention} illustrates the results of this intervention: AT denotes adversarial training that perturbs \emph{all} racial features, whereas targeted AT specifically perturbs the \emph{identified} sensitive racial feature ``American Indian".  
The results indicate that targeted AT markedly outperforms all baseline models, achieving a significant improvement in accuracy for the worst-performing racial group. 
This outcome effectively demonstrates the utility of our feature stability analysis in guiding targeted improvements to model performance and fairness.\\

\newpage
\noindent\textbf{Public coverage prediction}
\vspace{1mm}

\noindent We replicated the aforementioned experiment on the ACS PubCov dataset, which involves predicting an individual's public health insurance status. Following the previous setup, we identify and display the top-5 most sensitive features for both LR and MLP models in Figure \ref{fig:pubcov_feature}. 
Additionally, Figure \ref{fig:pubcov_algorithmic_intervention} presents the accuracy for the worst-performing racial group for each method.

The findings reveal several key insights:
\textrm{(i)} The MLP model outperforms the LR model in this context (71\% vs. 67\%), and it exhibits less sensitivity to input features. 
This observation suggests that feature sensitivity is influenced by both the nature of the task and the characteristics of the model.
\textrm{(ii)} Consistent with previous results, the ``American Indian" racial feature is identified as sensitive in both models. 
The accuracy of the worst-performing racial group further underscores the presence of discrimination against minority groups.
\textrm{(iii)} Leveraging our feature stability analysis, targeted AT achieves the most notable improvement. 
This again underscores the effectiveness of our evaluation method in enhancing model performance and fairness.

\vspace{2mm}
\noindent\textbf{COVID-19 mortality prediction}

\vspace{2mm}

\noindent We use the COVID-19 dataset, and the task is to predict the mortality of a patient based on features including comorbidities, symptoms, and demographic characteristics.
For the LR and MLP models trained with ERM, we follow the outlines in Section \ref{subsec:feature-analysis} and identify the top-5 most sensitive features, as shown in Figure \ref{fig:covid_feature}.
From the results, we observe that: 
\textrm{(i)} Consistent with the trends observed in the income prediction task, the LR model demonstrates lower sensitivity to input features compared to the MLP model when their performance levels are comparable;
\textrm{(ii)} Notably, both LR and MLP models are quite sensitive to the ``Age" feature. 
Given the variety of risk factors for COVID-19, such as comorbidities and symptoms, it is concerning that these models might overemphasize age, which is not the sole determinant of mortality. 
This highlights a critical need to ensure models effectively account for diverse age groups and do not rely excessively on age as a predictive factor.

\begin{figure}[htbp]
  \centering
\subfloat[Feature Stability]{\includegraphics[width=0.45\textwidth]{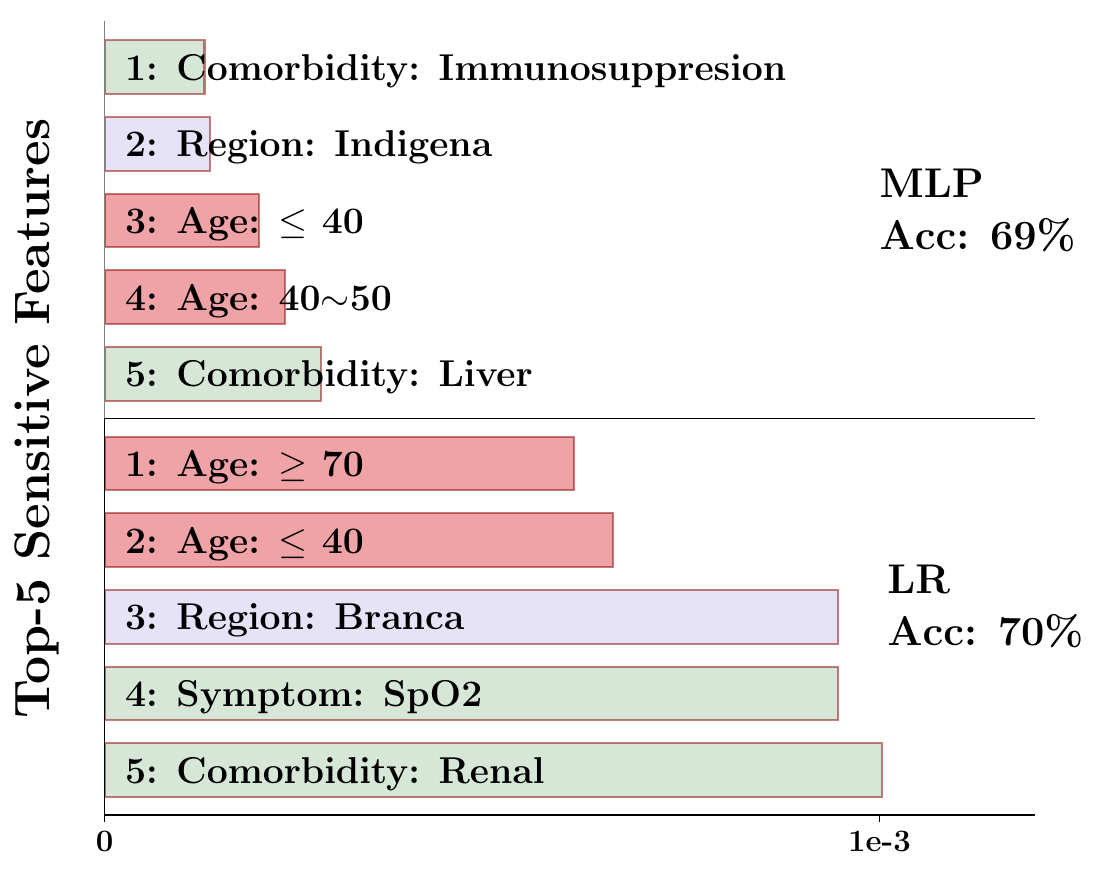}\label{fig:covid_feature}}
  \subfloat[Accuracy \& Macro F1 Score]{\includegraphics[width=0.45\textwidth]{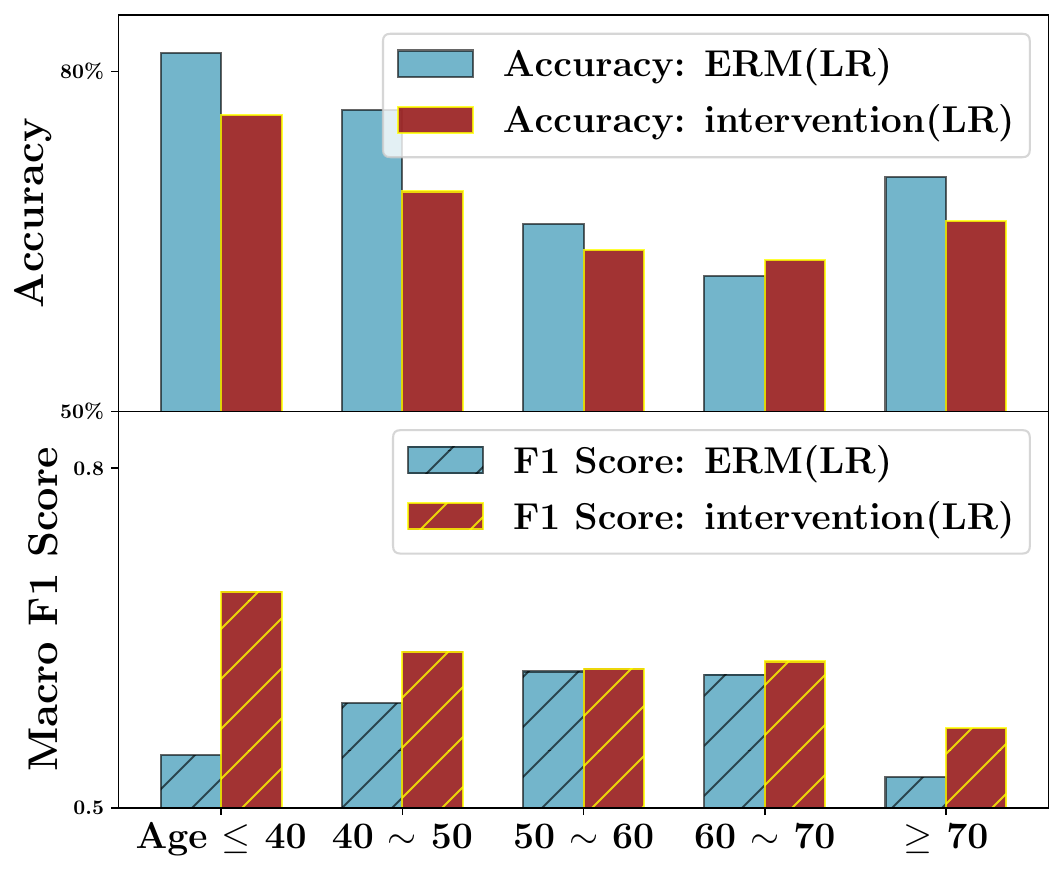}\label{fig:covid_acc}}

  \caption{Results of the COVID-19 mortality prediction task. (a): The top-5 most sensitive features for MLP and LR, respectively. (b): The prediction accuracy (upper sub-figure) and macro F1 score (lower sub-figure) before and after algorithmic intervention on the LR model.}
 \label{fig:covid_prediction}  
\end{figure}
Building on these insights, we further evaluate the accuracy and macro F1 score across different age groups for the LR model. 
As illustrated in Figure \ref{fig:covid_prediction}, the accuracy for younger individuals (age $<$ 40) and older individuals (age $\geq$ 70) is notably high (the blue bars in the upper sub-figure). 
However, their corresponding macro F1 scores are significantly lower (as shown by the blue bars in the lower sub-figure). 
This suggests that the LR model may overly rely on the age feature for making predictions. 
For example, it tends to predict survival for younger individuals and mortality for older individuals with high probability, irrespective of other relevant clinical indicators. 
Such a simplistic approach raises concerns about the model's ability to provide nuanced predictions for these age groups.

Considering the possibility of varied mortality prediction mechanisms among different age groups, we propose a targeted algorithmic intervention: training \emph{distinct} LR models for each age group. 
From the lower sub-figure in Figure \ref{fig:covid_prediction}, we  see a substantial improvement in macro F1 scores for both younger and older populations.

From these three real-world experiments, we demonstrate how the proposed feature stability analysis can help discover potential discrimination and inform targeted algorithmic interventions to improve the model's reliability and fairness.

\section{Closing Remarks}

This work proposes an OT-based stability criterion that allows both  data corruptions and sub-population shifts within a single framework. Applied to three real-world datasets, our method yields insightful observations into the robustness and reliability of machine learning models, and suggests potential algorithmic interventions for further enhancing model performance.
The utility of our stability evaluation criterion to modern model architectures (e.g., Transformer, tree-based ensembles) and popular real-world applications 
(e.g., LLMs) is natural to further explore. 


\section*{Impact Statements}
In this paper, we propose an OT-based stability criterion that addresses the challenges posed by both data corruptions and sub-population shifts, offering a comprehensive approach to evaluating the robustness of machine learning models.
The potential broader impact of this work is significant, particularly in providing a principle approach to evaluate fairness and reliability of models deployed in real-world scenarios (based on specified criteria which we take as given).  
By enabling more nuanced assessments of model stability, our criterion can help prevent the deployment of biased or unreliable models, thereby contributing to more equitable outcomes, especially in high-stakes applications like healthcare, finance, and social welfare. 
Furthermore, our work underscores the necessity of considering and mitigating potential biases and unfairness in automated decision-making systems. 
As machine learning continues to play an increasingly integral role in societal functions, the tools and methodologies developed in this study provide crucial steps towards ensuring that these technologies are used responsibly and ethically.

\bibliography{example_paper.bib}
\bibliographystyle{apalike}

\newpage
\newpage
\appendix
\onecolumn

\section{Proofs}
\subsection{Proof of Theorem \ref{thm:duality}}
\label{appendix-sec:strong-duality}
\begin{proof}

To start with, we first reformulation the primal problem \eqref{eq:primal}  into an infinite-dimensional linear program: 
\begin{equation}
\label{eq:primal_general}
    \begin{array}{cll}
     \mathop{\inf}\limits_{\pi} & \EE_{\pi}[c((Z,W),(\hat Z, \hat W))] \\
        \st & \pi \in  \mathcal{P}((\mc Z \times \mc W)^2) \\ 
        & r-\EE_{\pi} [W \cdot \ell(\beta, Z)] \leq 0\\ 
        & \EE_{\pi}[W] =1      \\ 
        & 
        \pi_{(\hat Z,\hat W)} = \hat \P. 
        \tag{Primal}
    \end{array}
\end{equation}

We aim to apply Sion's minimax theorem to the Lagrangian function
\begin{equation*}
    L(\pi;h,\alpha) = h r + \alpha +\EE_{\pi}[c((Z, W), (\hat Z, \hat W))-h\cdot W\cdot \ell(\beta,Z)-\alpha \cdot W],
\end{equation*}
where $h \in \R_+$, $\alpha \in \R$, and $\pi$ belongs to the primal feasible set 
\[
    \Pi_{\hat \P}= \left\{\pi \in \mc P((\mc Z \times \mc W)^2) \; :\; \pi_{(\hat Z, \hat W)}  =\hat \P \right\}.
\]
Since $\mathcal{Z}\times\mathcal{W}$ is compact, it follows that $\mathcal{P}(\mathcal{Z}\times\mathcal{W})$ is tight. Furthermore, as a subset of a tight set is also tight, we conclude that $\Pi_{\hat{\P}}$ is tight as well. Consequently, according to Prokhorov's theorem \citep[Theorem 2.4]{van2000asymptotic}, $\Pi_{\hat{\P}}$ has a compact closure. By taking the limit in the marginal equation, we observe that $\Pi_{\hat{\P}}$ is weakly closed, establishing that $\Pi_{\hat{\P}}$ is indeed compact. Moreover, it can be readily demonstrated that $\Pi_{\hat{\P}}$ is convex.

The Lagrangian function $L(\pi;h,\alpha)$ is  linear in both $\pi$ and $(h,\alpha)$. To employ Sion's minimax theorem, we will now prove that (i) $L(\pi;h,\alpha)$ is  lower semicontinuous in $\pi$ under the weak topology and (ii) continuous in $(h,\alpha)$ under the uniform topology in $\R_+ \times \R$. 

\textrm{(i)} Suppose  that $\pi_n$ converges weakly to $\pi$. 
    Then,  Portmanteau theorem states that for any lower semicontinuous function $g$ that is bounded below, we have
    \[
    \mathop{\lim\inf}_{n\rightarrow +\infty}\int g\,  \diff \pi_n \ge  \int g \, \diff \pi. 
    \]
   Since $\ell(\beta,\cdot)$ is upper semicontinuous  for all $\beta$ and $w,h\ge 0$, we can conclude that $h\cdot w\cdot \ell(\beta,z)$ is upper semicontinuous w.r.t $(z,w)$. Moreover, armed with the lower semicontinuity of the function  $c((z, w),(\hat z, \hat w))$, we know the following candidate function 
    \[ 
     c((z, w), (\hat z, \hat w))-h\cdot w\cdot \ell(\beta,z)-\alpha\cdot w
    \]
    is lower semicontinuous with respect to $(z,w)$ for any $(\hat z, \hat w) \in \mc Z \times \mc W$. As $\mc Z \times \mc W$ is compact, the above candidate function is also bounded below. Thus, we have
    \[
    \mathop{\lim\inf}_{n\rightarrow +\infty} L(\pi_n;h,\alpha)\ge  L(\pi;h,\alpha).\]
    It follows that $L(\pi; h, \alpha)$ is lower semicontinuous in~$\pi$ under the weak topology.
    
\textrm{(ii)} Suppose now that $\lim_{n\rightarrow +\infty}h_n= h$ in the Euclidean topology and $\lim_{n\rightarrow \infty}\alpha_n=\alpha$ in the Euclidean topology. There exists $\bar h \in \R_+$ and $\bar \alpha \in \R$ with $\sup_{n \to \infty} |h_n|\leq \bar h $ and  $\sup_{n \to \infty} |\alpha_n| <\bar \alpha$ for all $n\geq 1$. Thus, by the dominated convergence theorem, we have 
    \[
    \mathop{\lim}_{n\rightarrow +\infty} L(\pi;h_n,\alpha_n)= L(\pi;h,\alpha).
    \]
    We then conclude that $L(\pi; h, \alpha)$ is continuous in~$(h,\alpha)$ under the Ecludiean topology in~$\R_+\times \R$.

 We are now prepared to utilize Sion's minimax theorem, and thus, we have:

\begin{equation}
\label{eq:sion}
\inf_{\pi\in \Pi_{\hat \P}} \sup_{h\in\R_+,\alpha \in \R}
L(\pi;h,\alpha)  = \sup_{h\in\R_+,\alpha \in \R}\inf_{\pi\in \Pi_{\hat\P}} L(\pi;h,\alpha). 
\end{equation}

Our subsequent task involves demonstrating the equivalence between the left-hand side of \eqref{eq:sion} and the primal problem \eqref{eq:primal_general}. To achieve this, we will re-express the function $L$ as follows:
\begin{align*}
L(\pi;h,\alpha)  = \EE_{\pi}[c((Z,W),(\hat Z, \hat W)] + h\left ( r- \EE_{\pi}[ W\cdot \ell(\beta,Z)] \right)+\alpha (1-\EE_\pi[W]). 
\end{align*}
Then, we can see $\inf_{\pi\in \Pi_{\hat \P}} \sup_{h\in\R_+,\alpha \in \R}
L(\pi;h,\alpha) $ is bounded above. To start with, we construct a single support distribution as follows:
$\Q_0 = \delta_{(z^\star,1)}$ where $z^\star = \mathop{\arg\max}_{z\in\mc Z} \ell(\beta,z)$. Then, we have
\begin{align*}
\inf_{\pi\in \Pi_{\hat \P}} \sup_{h\in\R_+,\alpha \in \R}
L(\pi;h,\alpha)  & \leq  \sup_{h\in\R_+,\alpha \in \R}  L( \Q_0 \otimes \hat \P;h,\alpha), \\
&= \mathbb{E}_{\Q_{0} \otimes \hat{\P}}[c((Z,W),(\hat Z, \hat W))] + \sup_{h\in\R_+} h(r-\bar r) <+\infty , 
\end{align*}
where the second inequality follows from $\EE_{\Q_0}[W] =1$ and  the last equality holds as we know $r\leq \bar r = \EE_{\Q_0}[\ell(\beta,Z)] = \max_{z\in Z}\ell(\beta,Z)$ and $c$ is continuous and hence bounded on a compact domain $\mc Z \times W$. For any feasible point $\pi \in \Pi_{\hat \P}$, let us consider the inner supremum of the left-hand-side of \eqref{eq:sion}, ensuring it doesn't go to infinity. In this case, we find that
\begin{align*}
& r- \EE_{\pi}[ W\cdot \ell(\beta,Z)] \leq 0 \\
& \EE_{\pi}[W] =1.
\end{align*}
It remains to be shown that the sup-inf part is equivalent to the dual problem \eqref{eq:dual}. To do this, we rewrite the dual problem as 
\begin{align*}
& \sup_{h\in\R_+,\alpha \in \R}\inf_{\pi\in \Pi_{\hat\P}} L(\pi;h,\alpha).  \\
=& \sup_{h\in\R_+,\alpha \in \R} h r + \alpha + \inf_{\pi \in \Pi_
{\hat\P}}\EE_{\pi}[c((Z, W), (\hat Z, \hat W))-h\cdot W\cdot \ell(\beta,Z)-\alpha \cdot W].
\end{align*}
The last step is to take the supremum of $L$ over $\pi\in\Pi_{\hat \P}$. That is,
\begin{align*}
\ & \inf_{\pi \in \Pi_
{\hat\P}}\EE_{\pi}[c((Z, W), (\hat Z, \hat W))-h\cdot W\cdot \ell(\beta,Z)-\alpha \cdot W] \\
= \ & \EE_{\hat \P} \left [\min\limits_{(z,w) \in \mc Z \times \mc W } c((z, w), (\hat Z, \hat W))-h\cdot w \cdot \ell(\beta,z)-\alpha \cdot w \right],
\end{align*}
 due to the measurability of functions of the form $\min_{(z,w) \in \mc Z \times \mc W } c((z, w), (\hat Z, \hat W))-h\cdot w \cdot \ell(\beta,z)-\alpha \cdot w$, following the similar argument  in  \citep{blanchet2019quantifying}. 
\end{proof}

When the reference measure is discrete, i.e., $\P_0= \frac{1}{n} \sum_{i=1}^n \delta_{\hat z_i}$, we can get the strong duality result under some mild conditions. 
%

\begin{theorem}[Strong duality for problem \eqref{eq:primal}] 
  Suppose that 
 $r< \EE_{\PP_0}[\ell(\beta,Z)]$ holds. 
Then we have, 
\begin{equation}
	\mathfrak R(\beta,r) = \sup_{h \in \R_+, \alpha \in \R} hr +\alpha +\\ \EE_{\hat \P}\left[\tilde{\ell}_{c}^{\alpha, h}(\beta,(\hat Z,\hat W) )\right]. 
 \tag{D}
\end{equation}
\end{theorem}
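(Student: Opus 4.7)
The strategy is to bypass the Sion minimax argument used in the proof of Theorem~\ref{thm:duality} (which required compactness of $\mc Z \times \mc W$ and (semi-)continuity of $c$ and $\ell(\beta,\cdot)$) by recasting the primal as a conic linear program on a finite product of measure spaces, and then invoking the abstract semi-infinite duality theorem of \citet[Proposition~3.4]{shapiro2001duality}. Discreteness of $\P_0$ is exactly what enables this reformulation.

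First I would disintegrate any feasible coupling as $\pi = \tfrac{1}{n}\sum_{i=1}^n \pi_i \otimes \delta_{(\hat z_i,1)}$, where $\pi_i \in \mc P(\mc Z \times \mc W)$ is the conditional of $(Z,W)$ given $(\hat Z,\hat W) = (\hat z_i,1)$. After this reduction, \eqref{eq:primal_general} becomes the conic LP: minimize $\tfrac{1}{n}\sum_{i=1}^n \int c((z,w),(\hat z_i,1))\, d\pi_i(z,w)$ over non-negative Borel measures $(\pi_i)_{i=1}^n$ on $\mc Z \times \mc W$, subject to the $n$ normalization equalities $\int d\pi_i = 1$, the risk inequality $\tfrac{1}{n}\sum_i \int w\,\ell(\beta,z)\, d\pi_i \geq r$, and the mean constraint $\tfrac{1}{n}\sum_i \int w\, d\pi_i = 1$. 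The constraint map now takes values in a finite-dimensional Euclidean space, which is exactly the regime in which Shapiro's abstract duality applies.

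Next I would form the Lagrangian with multipliers $h \in \R_+$ for the risk inequality, $\alpha \in \R$ for the mean equality, and $\lambda_i \in \R$ for each normalization. Minimizing the inner linear functional over the non-negative measure cone forces the pointwise inequality $c((z,w),(\hat z_i,1)) + \lambda_i + \alpha w - h w\,\ell(\beta,z) \geq 0$, and optimizing $\lambda_i$ so that equality is attained at some $(z,w)$ yields precisely the surrogate $\tilde{\ell}_c^{\alpha,h}(\beta,(\hat z_i,1))$ defined after \eqref{eq:dual}. The resulting abstract dual therefore coincides with \eqref{eq:dual}, and weak duality is automatic; the remaining task is strong duality.

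The crux is verifying the constraint-qualification hypothesis of \citet[Proposition~3.4]{shapiro2001duality}, which asks for the image of the primal feasible set under the (finite-dimensional) constraint map to be closed, or equivalently a Slater-type condition on the primal. This is where the hypothesis $r < \EE_{\P_0}[\ell(\beta,Z)]$ enters: the trivial coupling $\pi_i = \delta_{(\hat z_i,1)}$ already produces $\EE[W\,\ell(\beta,Z)] = \EE_{\P_0}[\ell(\beta,Z)] > r$ with $\EE[W]=1$, so the risk inequality is strictly feasible, which together with the finite dimensionality of the constraint image yields the required closedness and hence zero duality gap. The main technical obstacle I anticipate is controlling the non-compactness of $\mc W = \R_+$, since mass in $\pi_i$ could a priori escape to $w = +\infty$; this is handled by combining the hard constraint $\int w\, d\pi_i = 1$ (which bounds the first moment of the $w$-marginal) with the linear penalty $\alpha w$ in the dual surrogate, arguments that parallel the standard treatment of moment-constrained optimal transport and that obviate the need for compactness and semicontinuity assumptions used in Theorem~\ref{thm:duality}.
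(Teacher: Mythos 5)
Your proposal is correct and follows essentially the same route as the paper: both recast the primal as a conic linear program with a finite-dimensional constraint image (which is where discreteness of $\P_0$ is used), verify the Slater condition at the trivial coupling $\P_0\otimes\delta_1$ using the hypothesis $r<\EE_{\P_0}[\ell(\beta,Z)]$, and then invoke \citet[Proposition~3.4]{shapiro2001duality} to close the duality gap. The only difference is one of exposition — you make the disintegration into conditionals $\pi_i$ and the Lagrangian elimination of the normalization multipliers explicit, whereas the paper states the reduction and the citation directly.
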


\begin{proof}
To start, we have the primal problem \eqref{eq:primal} admits
\begin{equation}
    \begin{array}{cll}
    \mathfrak R(\beta,r)=   
     \mathop{\inf}\limits_{\pi:\pi_{(\hat Z,\hat W)} = \hat \P, \pi\in \mathcal{P}((\mathcal{Z}\times \mathcal{W})^2)} & \EE_{\pi}[c((Z,W),(\hat Z, \hat W))] \\
        \st & r-\EE_{\pi} [W \cdot \ell(\beta, Z)] \leq 0\\ 
        & \EE_{\pi}[W] =1.      
    \end{array}
\end{equation}
Due to the condition
 $r< \EE_{\PP_0}[\ell(\beta,Z)]$,
  we know the Slater condition, i.e., $
r-\EE_{\PP_0 \otimes \delta_1 \times \PP_0\otimes \delta_1} [W \cdot \ell(\beta, Z)] < 0
$, holds. Thus, we can apply \citet[Proposition 3.4]{shapiro2001duality} to get the strong duality result directly. That is, 
\begin{equation*}
	\mathfrak R(\beta,r) = \sup_{h \in \R_+, \alpha \in \R} hr +\alpha +\\ \EE_{\hat \P}\left[\inf_{(z,w) \in \mc Z \times \mc W}  c((z,w),(\hat Z, \hat W)) +\alpha w - h\cdot  w \cdot \ell(\beta,
 z)\right]. 
\end{equation*}

\end{proof}

\subsection{Proof of Proposition \ref{theorem:dual-reformulation}}
\begin{proof}
 Now, we are trying to calculate the surrogate function with our proposed cost function $c$ in \eqref{eq:cost_f} . Then, we have 
 \begin{align*}
\tilde{\ell}_{c}^{\alpha,h}(\beta,(\hat z,\hat w) ) = & \min_{(z,w)\in \mc Z\times \mc W }  \theta_1 \cdot w\cdot d(z,\hat z) + \theta_2 (\phi(w)-\phi(\hat w) )_+ -\alpha w - h\cdot  w \cdot \ell(\beta, z) \\
= & \min_{z \in \mc Z} \theta_2 \cdot \min_{w\in\R} -w\frac{h\cdot \ell(\beta,z)-\theta_1\cdot d(z,\hat z)+\alpha }{\theta_2}+\phi(w)+\mathbb{I}_{\mc W}(w) \\
= & \min_{z\in\mc Z} - \theta_2\cdot  (\phi+\mathbb{I}_{\mc W})^*\left(\frac{h\cdot \ell(\beta,z)-\theta_1\cdot d(z,\hat z)+ \alpha}{\theta_2}\right). 
 \end{align*}
 where the first equality follows as $\hat W = 1$ almost surely and $\phi(1) =0$, and the second equality holds due to the definition of conjugate functions. 

\textrm{(i)} When $\mc W = \R_+$ and $\phi(t) = t \log t -t +1$, we know its conjugate function $(\phi+\mathbb{I}_{\R_+})^*= \exp(t)-1$. Consequently, we obtain the following: 
\begin{align*}
\tilde{\ell}_{c}^{\alpha,h}(\beta,(\hat z,\hat w)) & = \min_{z\in \mc Z} - \theta_2\cdot \exp\left(\frac{h\cdot \ell(\beta,z)-\theta_1\cdot d(z,\hat z)+\alpha }{\theta_2}\right)+\theta_2 \\
& = - \theta_2\cdot \exp\left(\frac{\max_{z\in \mc Z} h\cdot \ell(\beta,z)-\theta_1\cdot d(z,\hat z)+\alpha }{\theta_2}\right)  +\theta_2 \\
& =- \theta_2\cdot \exp\left(\frac{\ell_{h,\theta_1}(\hat z)+\alpha }{\theta_2}\right)+\theta_2 . 
\end{align*}
where the second equality follows from the fact the function $\exp(\cdot)$ is monotonically increasing. Hence, we can reformulate the dual problem \eqref{eq:dual} as 
\[
\mathfrak R(\beta,r) = \sup_{h \in \R_+, \alpha \in \R} hr +\alpha -\theta_2 \EE_{\P_0}\left[\exp\left(\frac{\ell_{h,\theta_1}(\hat Z)+\alpha }{\theta_2}\right)\right]+\theta_2. 
\]
Next, we will solve the supremum problem via $\alpha$ and the first-order condition reads 
\[1-\exp \left(\frac{\alpha}{\theta_2}\right)\EE_{\P_0}\left[\exp\left(\frac{\ell_{h,\theta_1}(\hat Z)}{\theta_2}\right)\right] = 0 \] 
and $\alpha^\star =-\theta_2 \log\left(\EE_{\P_0}\left[\frac{\ell_{h,\theta_1}(\hat Z)}{\theta_2}\right]\right)$. Put all of them together, we get
\[
\mathfrak R(\beta,r) = \sup_{h \in \R_+} hr -\theta_2 \log\left(\EE_{\P_0}\left[\exp\left(\frac{\ell_{h,\theta_1}(\hat Z)}{\theta_2}\right)\right]\right). 
\]
\textrm{(ii)} 
When $\mathcal{W} = \R+$ and $\phi(t) = (t-1)^2$, the conjugate function can be computed as $(\phi + \mathbb{I}_{\R_+})^*(t) = (\frac{t}{2}+1)_+^2-1$. Additionally, it is straightforward to demonstrate that $(\phi + \mathbb{I}_{\R_+})^*(t)$ is a monotonically increasing function. Hence, we have:
\begin{align*}
\tilde{\ell}_{c}^{\alpha,h}(\beta,(\hat z,\hat w) ) = & \min_{z\in\mc Z} - \theta_2\cdot  (\phi+\mathbb{I}_{\mc W})^*\left(\frac{h\cdot \ell(\beta,z)-\theta_1\cdot d(z,\hat z)+ \alpha}{\theta_2}\right) \\ 
=&  \min_{z\in\mc Z} - \theta_2\cdot  \left(\frac{h\cdot \ell(\beta,z)-\theta_1\cdot d(z,\hat z)+ \alpha}{2\theta_2}+1\right)_+^2+\theta_2\\
= & -\theta_2\cdot \left(\frac{\ell_{h,\theta_1}(\hat z)+ \alpha}{2\theta_2}+1\right)_+^2+\theta_2
\end{align*}
where the third equality holds as the monotonicity of $(\phi + \mathbb{I}_{\R_+})^*$. Then, we can reduce the dual problme \eqref{eq:dual} as
\[\sup\limits_{h\geq 0, \alpha\in\mathbb R}  hr+\alpha+\theta_2 -\theta_2\mathbb E_{\P_0}\left[\left(\frac{\ell_{h,\theta_1}(\hat Z)+\alpha}{2\theta_2}+1\right)_+^2 \right].\]
\end{proof}
\begin{remark}
We want to highlight the distinction between the KL and $\chi^2$-divergence cases. In the latter case, we are unable to derive a closed-form expression for the optimal $\alpha^\star$. Instead, we must reduce it to a solution of a piecewise linear equation as follows:
\begin{equation}
\label{equ:alpha-star}
     \EE_{\P_0}\left[\left(\frac{\ell_{h,\theta_1}(\hat Z)+\alpha}{2\theta_2}+1\right)_+\right] =1. 
\end{equation}
\end{remark}

\subsection{Proof of Theorem \ref{thm:kl_linear}}
\label{subsec:appendix-linear}
\begin{proof}
    By introducing epigraphical
auxiliary variable~$t\in \R$, we know problem \eqref{equ:D-risk-kl} is equivalent to 
\begin{align}
	& \min_{h\geq 0} \, -hr+\theta_2 \log \mathbb E_{\P_0} \left[\exp\left(\frac{\ell_{h,\theta_1}(\hat Z)}{\theta_2}\right) \right]\notag\\
 =   & \left\{ \begin{array}{cclll}
         &\min  &-h r  +t & \\
         &\st   & h \in \R_+, t \in \R\\
         && \theta_2 \log \mathbb E_{\P_0} \left[\exp\left(\frac{\ell_{h,\theta_1}(\hat Z)}{\theta_2}\right) \right] \leq t
    \end{array}\right. \label{eq:reformu1}\\
    =  \, & \left\{ \begin{array}{cclll}
         &\min  &-h r  +t & \\
         &\st   & \lambda \in \R_+, t \in \R, \eta \in \R_+^n\\
         && (\eta_i,  \theta_2, \ell_{h,\theta_1}(\hat z_i)- t) \in \mc K_{\exp} &\forall i \in [n] \\
           && \frac{1}{n}\sum_{i=1}^n \eta_i \leq  \theta_2
    \end{array}\right. \notag \\
    = \, &\left\{ \begin{array}{cclll}
         &\min  &-h r  +t & \\
         &\st   & \lambda \in \R_+, t \in \R, \eta \in \R_+^n, p\in\R_n\\
          && (\eta_i,  \theta_2, p_i- t) \in \mc K_{\exp} &\forall i \in [n] \\
         && \ell_{h,\theta_1}(\hat z_i)\leq p_i  &\forall i \in [n]\\
             &&\frac{1}{n}\sum_{i=1}^n \eta_i \leq  \theta_2.
    \end{array}\right.\label{eq:reformu2}
\end{align}
Here,  the second equality can be derived from the fact that the second inequality in problem \eqref{eq:reformu1} can be reformulated as
\[
\mathbb E_{\P_0} \left[\exp\left(\frac{\ell_{h,\theta_1}(\hat Z)-t}{\theta_2}\right) \right]  \leq 1. 
\]
To handle this constraint, we introduce an auxiliary variable $\eta \in \R_+^n$, allowing us to further decompose it into $n$ exponential cone constraints and one additional linear constraint. Specifically, we have  
\begin{equation*}
\left\{\begin{aligned}
& \frac{1}{n} \sum_{i=1}^n \eta_i \leq \theta_2 \nonumber \\
& \theta_2 \exp\left(\frac{\ell_{h,\theta_1}(\hat z_i)-t}{\theta_2}\right) \leq \eta_i, \quad \forall i\in[n] \nonumber
\end{aligned}\right.
\end{equation*}
The third constraint can be further reduced to \eqref{eq:reformu2} by considering the fact that the set $\mathcal{K}_{\exp}$ corresponds to the exponential cone, which is defined as 
\begin{equation*}
K_{\exp} = \left\{(x_1,x_2,x_3)\in \R^3: x_1\ge x_2 \cdot \exp\left(\tfrac{x_3}{x_2}\right),x_2>0\right\} \cup \{(x_1,0,x_3)\in\R^3: x_1\ge 0, x_3\leq 0\}. 
\end{equation*}
The fourth equality is due to $\ell_{h,\theta_1}(\hat z_i)\leq p_i$ when we introduce  auxiliary variables $p_i$.

Next, we show that $\ell_{h,\theta_1}(\hat z_i)\leq p_i $ admits the following equivalent forms
\begin{equation}
\label{eq:piecelinear_sup}
\begin{aligned}
& \ell_{h,\theta_1}(\hat z_i)\leq p_i \\
\iff & \sup\limits_{z \in \mc Z} \left\{h\cdot \max\limits_{k\in[K]} y \cdot a_k^\top x +b_k \!-\!\theta_1 d(z,\hat z_i) \right\}\!\leq\! p_i \\
\iff & \sup\limits_{z \in \mc Z} \left\{h\cdot  y \cdot a_k^\top x +b_k \!-\!\theta_1 d(z,\hat z_i) \right\}\!\leq\! p_i ~\forall k\in[K]\\
\iff & \sup\limits_{x \in \R^d} \left\{h\cdot  \hat y_i \cdot a_k^\top x +b_k \!-\!\theta_1 \|x-\hat x_i\|_2^2 \right\}\!\leq\! p_i ~\forall k\in[K]\\
\iff &   \frac{\|a_k\|_2^2}{4\theta_1}\cdot h^2  + \hat y_i \cdot a_k^T\hat x_i \cdot h +b_k\leq p_i,\, ~\forall k\in[K]
\end{aligned}
\end{equation}
where the second equivalence arises from the non-negativity of $h$, while the third one can be derived from the nature of the cost function, which is defined as $d(z,\hat z_i) = \|x-\hat x_i\|_2^2 + \infty \cdot |y-\hat y_i|$. The second term in the cost function prevents us from perturbing the label due to the imposed budget limit.

Put everthing together, we have 
\begin{align*}
\begin{array}{cclll}
         &\min  &-h r  +t & \\
         &\st   & \lambda \in \R_+, t \in \R, \eta \in \R_+^n, p\in\R_n\\
          && (\eta_i,  \theta_2, p_i- t) \in \mc K_{\exp} &\forall i \in [n] \\
         && \frac{\|a_k\|_2^2}{4\theta_1}\cdot h^2  + \hat y_i \cdot a_k^T\hat x_i \cdot h +b_k\leq p_i,\,  &~\forall k\in[K], \, \forall i \in [n]\\
             &&\frac{1}{n}\sum_{i=1}^n \eta_i \leq  \theta_2.
    \end{array}
\end{align*}
\end{proof}


\subsection{Proof of Theorem \ref{thm:chi_linear}}
\label{subsec:proof-chi}
\begin{proof}
By introducing epigraphical
auxiliary variable~$t\in \R$, we know problem \eqref{equ:D-risk-kl} is equivalent to 
 \begin{align}
	&\min\limits_{h\geq 0, \alpha\in\mathbb R}  \, -hr-\alpha+\theta_2 +\theta_2\mathbb E_{\P_0}\left[\left(\frac{\ell_{h,\theta_1}(\hat Z)+\alpha}{2\theta_2}+1\right)_+^2 \right] \notag \\
 =   & \left\{ \begin{array}{cclll}
         &\min  &-h r  -\alpha+t & \\
         &\st   & h \in \R_+, \alpha \in \R, t \in \R\\
         && \theta_2\mathbb E_{\P_0}\left[\left(\frac{\ell_{h,\theta_1}(\hat Z)+\alpha}{2\theta_2}+1\right)_+^2 \right] \leq t 
    \end{array}\right.\notag\\
    =  \, & \left\{ \begin{array}{cclll}
         &\min  &-h r  +t & \\
         &\st   & h \in \R_+, \alpha \in \R, t \in \R, \eta \in \R_+^n\\
         &&  \ell_{h,\theta_1}(\hat z)+{2\theta_2}\alpha +2\theta_2 \leq 2\theta_2 \eta_i \,  &~ \forall i \in [n] \\
           && \frac{\theta_2}{n} \sum_{i=1}^n \eta_i^2  \leq t \nonumber  
    \end{array}\right. \notag \\
    = \, & \left\{ \begin{array}{cclll}
         &\min  &-h r  +t & \\
         &\st   & h \in \R_+, \alpha \in \R, t \in \R, \eta \in \R_+^n\\
         &&   \frac{\|a_k\|_2^2}{4\theta_1}\cdot h^2  + \hat y_i \cdot a_k^T\hat x_i \cdot h +b_k+{2\theta_2}\alpha +2\theta_2 \leq 2\theta_2 \eta_i\,  &~\forall k\in[K], \forall i \in [n]  \\
           && \frac{\theta_2}{n} \sum_{i=1}^n \eta_i^2  \leq t \nonumber  
    \end{array}\right.\label{eq:reformu2}
\end{align}
Here, the second equality follows from the fact that the constraint can be reformulated as 
\begin{equation*}
\left\{\begin{aligned}
& \frac{\theta_2}{n} \sum_{i=1}^n \eta_i^2  \leq t \nonumber \\
& \ell_{h,\theta_1}(\hat z)+{2\theta_2}\alpha +2\theta_2 \leq 2\theta_2 \eta_i, \eta_i\in\R_+. 
\end{aligned}\right.
\end{equation*}
as the function $(\cdot)_+^2$ is monotonically increasing. The last equality holds due to \eqref{eq:piecelinear_sup}
. \end{proof}

\section{Pseudo-code for Algorithms}
In this section, we provide the pseudo-code of our algorithms.
For $\phi(t)=t\log t-t+1$, please refer to Algorithm \ref{algo:1}, and for $\phi(t)=(t-1)^2$, please see Algorithm \ref{algo:2}.

\begin{algorithm*}[htbp]
\caption{Stability evaluation with general nonlinear loss functions ($\phi(t)=t\log t-t+1$).}
\label{algo:1}
\begin{algorithmic}[1]
\STATE {\bf Input}: trained model $f_\beta(\cdot)$, samples $\{\hat z_i\}_{i=1}^n$, adjustment parameters $\theta_1,\theta_2$, pre-defined threshold $r$;
\STATE {\bf Hyper-parameters}: outer iteration number $T_{\textrm{out}}$, inner iteration number $T_{\textrm{in}}$, learning rates $\eta, \gamma$;
\STATE {\bf Initialize} for $i\in [n]$, set $z_i^{(0)}\leftarrow \hat z_i$, and $h^{(0)}=1$;
\FOR{$t=0$ to $T_{\textrm{out}}-1$}
	\FOR{$k=0$ to $T_{\textrm{in}}-1$}
		\STATE For $i\in [n]$, $z_i^{(k+1)}\leftarrow z_i^{(k)}+\eta\cdot \nabla_Z \bigg(h^{(t)}\ell(\beta,z_i^{(k)})-\theta_1 d(z_i^{(k)}, \hat z_i)\bigg)$ (update samples using ADAM optimizer)
  	\ENDFOR
	\STATE Update the dual parameter using ADAM optimizer as: 
    \begin{equation*}
		h^{(t+1)}\leftarrow h^{(t)}+\gamma\cdot \nabla_h \bigg(h^{(t)}r- \theta_2 \log \sum_{i=1}^n \bigg[\exp(\frac{h^{(t)}\ell(\beta,z_i^{(T_{\text{in}})})-\theta_1 d(z_i^{(T_{\text{in}})},\hat z_i)}{\theta_2}) \bigg] \bigg)
	\end{equation*}    
\ENDFOR
\STATE {\bf Output}: stability criterion $\mathfrak R(\beta,r)$ (Equation \eqref{equ:D-risk-kl}), the most sensitive distribution $\hat{\Q}^*$ (according to Remark \ref{remark:q-star}).
\end{algorithmic}
\end{algorithm*}

\begin{algorithm}[htbp]
\caption{Stability evaluation with general nonlinear loss functions ($\phi(t)=(t-1)^2$)}
\label{algo:2}
\begin{algorithmic}[1]
\STATE {\bf Input}: trained model $f_\theta(\cdot)$, samples $\{\hat z_i\}_{i=1}^n$, adjustment parameters $\theta_1,\theta_2$, mis-classification threshold $r$;
\STATE {\bf Hyper-parameters}: outer iteration number $T_{\textrm{out}}$, inner iteration number $T_{\textrm{in}}$, learning rates $\eta, \gamma_h, \gamma_\alpha$;
\STATE {\bf Initialize} for $i\in [n]$, set $z_i^{(1)}\leftarrow \hat z_i$, and $h^{(1)}=1$;
\FOR{$t=1$ to $T_{\textrm{out}}$}
	\FOR{$k=1$ to $T_{\textrm{in}}$}
		\STATE For $i\in [n]$, $z_i^{(k+1)}\leftarrow z_i^{(k)}+\eta\cdot \nabla_Z \bigg(h^{(t)}\ell(\beta; z_i^{(k)})-\theta_1 d(z_i^{(k)}, \hat z_i)\bigg)$; \\ (update samples using ADAM optimizer)
	\ENDFOR
	\STATE Compute $\alpha^*$ via Equation \ref{equ:alpha-star};
	\STATE Update the dual parameter using ADAM optimizer as: 
    \begin{equation}
        h^{(t+1)}\leftarrow h^{(t)}+\gamma\cdot \nabla_h \bigg(  hr+\alpha^*+\theta_2 -\theta_2\sum_{i=1}^n \left(\frac{\ell_{h,\theta_1}(\beta,z_i^{(T_{\text{in}})})+\alpha^*}{2\theta_2}+1\right)_+^2    \bigg);       
    \end{equation}
\ENDFOR
\STATE {\bf Output}: stability criterion $\mathfrak R(\beta,r)$ (Equation \eqref{equ:D-risk-kl}), the most sensitive distribution $\hat{\Q}^*$ (according to Remark \ref{remark:q-star}).
\end{algorithmic}
\end{algorithm}

\end{document}